\newtheorem{assumption}{Assumption}
\newcommand\defeq{:=}
\DeclareMathOperator*{\argmin}{arg\;min}
\newcommand\vect[1]{\mathbf #1}
\newcommand{\va}{\vect{a}}  
\newcommand{\vb}{\vect{b}}
\newcommand{\vm}{\vect{m}}
\newcommand{\vt}{\vect{t}}
\newcommand{\vu}{\vect{u}}  
\newcommand{\vv}{\vect{v}}  
\newcommand{\vw}{\vect{w}}
\newcommand{\vx}{\vect{x}}  
\newcommand{\vz}{\vect{z}}
\newcommand{\mD}{\mathbf{D}}
\newcommand{\mL}{\mathbf{L}}
\newcommand{\mS}{\mathbf{S}}
\newcommand{\trueweights}{\overline{\vw}}
\newcommand{\samplesize}{M}
\newcommand{\signalsize}{N}
\newcommand{\graphsigs}{\mathcal{W}}
\newcommand{\edges}{\mathcal{E}}
\newcommand{\cluster}{\mathcal{C}}
\newcommand{\nodes}{\mathcal{V}}
\newcommand{\graph}{\mathcal{G}}
\newcommand{\samplingset}{\mathcal{M}}
\newcommand{\trainingset}{\mathcal{M}}
\newcommand{\expect}{{\rm E}}
\newcommand{\partition}{\mathcal{P}}
\newcommand{\boundary}{\partial \partition}
\newcommand{\compbound}{\overline{\partial \partition}}
\newcommand{\FIMeignmin}{L}
\newcommand{\FIMeignmax}{U}
\newcommand{\prob}{{\rm P}}
\newcommand{\numnodes}{N}
\newcommand{\numedges}{E}
\newcommand{\gindex}[1][i]{^{(#1)}}
\newcommand{\gsignal}{\vw}
\newcommand{\nodeidx}{i}
\newcommand{\gweight}{A}
\newcommand{\weightmtx}{\mathbf{A}}
\newcommand{\sigdimens}{d}
\newcommand{\featuredim}{d}
\newcommand{\featurelen}{d}
\newcommand{\FIM}{\mathbf{F}}
\newcommand{\FIMentry}{F}
\newcommand{\gvariable}{\mathbf{v}}
\newcommand{\gvariablep}{\mathbf{v}'}
\newcommand{\gdual}{\mathbf{u}}
\newcommand{\hatgsignal}{\widehat{\gsignal}}
\newcommand{\incidence}{\vect{D}}
\algnewcommand\algorithmicinput{\textbf{Input:}}
\algnewcommand\INPUT{\item[\algorithmicinput]}
\algnewcommand\algorithmicoutput{\textbf{Output:}}
\algnewcommand\OUTPUT{\item[\algorithmicoutput]}
\newtheorem{theorem}{Theorem}
\newtheorem{lemma}[theorem]{Lemma}
\newtheorem{corollary}[theorem]{Corollary}
\title{Learning Networked Exponential Families with Network Lasso}
\name{\hspace*{-10mm} Alexander Jung$^{1}$
}
\address{\normalsize $^1$Department of Computer Science, Aalto University, Espoo, Finland; firstname.lastname(at)aalto.fi \\
}
\begin{document}

\maketitle

\begin{abstract}
We propose networked exponential families to jointly leverage the information in the topology 
as well as the attributes (features) of networked data points. Networked exponential families 
are a flexible probabilistic model for heterogeneous datasets with intrinsic network structure.
These models can be learnt efficiently using network Lasso which implicitly pools or clusters 
the data points according to the intrinsic network structure and the local likelihood. The resulting 
method can be formulated as a non-smooth convex optimization problem which we solve using a 
primal-dual splitting method. This primal-dual method is appealing for big data applications 
as it can be implemented as a highly scalable message passing algorithm. 
\end{abstract}



\section{Introduction}
\label{sec_intro}

The data generated in many important application domains have an intrinsic network structure. 
Such networked data arises in the study of social networks, text document collections and 
personalized medicine \cite{RTMBlei2009,NetMedNat2010,Zachary77}. Network science 
provides powerful tools for the analysis of such data based on its intrinsic network structure \cite{NewmannBook,BigDataNetworksBook}. 
The network structure of datasets is complemented by the information 
contained in attributes (such as features or labels) of individual data points \cite{RTMBlei2009}. 
         
Consolidating prior work on networked (generalized) linear models \cite{LocalizedLinReg2019,LevinaNetworkPred}, 
we propose networked exponential families as a flexible probabilistic model for heterogenous and noisy 
data with an intrinsic network structure. By coupling the (node-wise) local parameters of an exponential family \cite{GraphModExpFamVarInfWainJor}, 
we jointly capitalize on network structure and the information conveyed by the features and labels of data points. 

Networked exponential families are powerful statistical models for many important application domains 
such as personalized (high-precision) health-care \cite{Lengerich2018}, 
or natural language processing \cite{Blei2003,RTMBlei2009}. 
In contrast to \cite{RTMBlei2009}, which uses a probabilistic model for the network structure of text corpora, 
this paper assumes the network structure as fixed and known. 

To learn networked exponential families, this paper implements the network Lasso in order to 
simultaneously cluster and optimize a probabilistic model \cite{NetworkLasso}. The implementation of 
nLasso is based on a primal-dual method which results in scalable message passing over the underlying data network. 
In contrast, to state-of-the art graph clustering methods which only use network structure, nLasso in networked exponential 
families jointly capitalizes on network structure and the information provided by observed node attributes. 
Joint clustering and optimization has been considered in \cite{PhysRevESSL} for probabilistic models of the network structure. 
In contrast, this paper considers the network structure fixed and given and use a probabilistic model for 
the node attributes (features and labels).

The idea of borrowing inferential power across networked data has also been used for bandit models 
in sequential decision making problems \cite{Gentile2014,Li2016}. In particular, the clustering bandit 
model coupled individual linear bandit models for nodes (representing users) using a domain-specific 
notion of similarity such as ``friendship'' relations in a social network. 

The closest to this work is \cite{LevinaNetworkPred} which considers regression with network cohesion (RNC). 
The RNC model is a special case of networked exponential families. While RNC  
uses a shared weight vector and a local (varying) intercept term, this paper allows for arbitrarily varying 
weight vectors (see end of Sec.\ \ref{sec_problem_formuation}). 

Another main difference between \cite{LevinaNetworkPred} and our approach is the 
choice of regularizer for the networked model. While \cite{LevinaNetworkPred}, similar to most existing 
work on semi-supervised learning \cite{SemiSupervisedBook}, uses the graph Laplacian 
quadratic form as a smoothness measure, our approach controls the non-smooth total variation (TV) of the 
model parameters. TV-based regularization produces predictors which are piece-wise constant over 
well-connected subset of nodes. This behaviour is useful in image processing of natural images which are 
composed or homogenous segments whose boundaries result in sharp edges \cite{Goldfarb2009}.


Minimizing the Laplacian quadratic form is a smooth convex problem resulting in a linear 
system. In contrast, TV minimization is a non-smooth convex 
optimization problem which requires more advanced techniques such as proximal methods 
\cite{pock_chambolle,ProximalMethods} (see Sec.\ \ref{sec_NLasso_PrimDual}). The higher 
computational cost of TV minimization affords improved accuracy when learning from a small 
number of labeled data points (see \cite{NSZ09} and Sec.\ \ref{sec_two_cluster}).

In order to learn networked exponential families, this paper applies the network Lasso (nLasso). 
The nLasso has been proposed recently as a natural extension of the Lasso to networked data \cite{NetworkLasso,HastieWainwrightBook}. 
We show how the nLasso can be implemented efficiently using a primal dual splitting method for 
convex optimization. The resulting scalable learning method amounts to a message 
passing protocol over the data network structure. 

\textbf{Contribution.} The main contributions of this paper are: 
\begin{itemize}
\item The introduction of networked exponential families as a 
probabilistic model for networked data. 
\item Extending prior \cite{JungAISTATS2019,LocalizedLinReg2019}, a bound on the nLasso error for general networked exponential families is presented.
\item A scalable nLasso implementation using a primal-dual method for convex optimization. The proposed 
formulation generalizes the method in \cite{Ambos2018} (for logistic regression) to arbitrary exponential families. 
\item Verification of computational and statistical properties of the proposed 
method using numerical experiments. 
\end{itemize}

\textbf{Outline.} We introduce networked exponential families in Sec.\ \ref{sec_problem_formuation}. 
Sec.\ \ref{sec_some_examples} details how some 
recently proposed models for networked data are obtained as special cases of networked exponential families. 
In Sec.\ \ref{sec_NLasso}, we show how to learn a networked exponential family using an instance of the nLasso 
optimization problem. We present an analysis of the nLasso estimation error in Sec.\ \ref{equ_analysis_error}. 
Sec.\ \ref{sec_NLasso_PrimDual} presents the implementation of nLasso using a primal-dual method for convex 
optimization. The computational and statistical properties of nLasso in networked exponential families are illustrated 
in numerical experiments within Sec.\ \ref{sec_numexp}.

\textbf{Notation.} 
We denote the $\ell_2$-norm of a vector as $\|\vect{x} \| \defeq \sqrt{\vect{x}^T \vect{x}}$. 
The spectral norm of a matrix is $\| \mathbf{M} \| \defeq \sup_{\| \vx\| \leq 1} \| \mathbf{M} \vx\|$.  
The convex conjugate of a function $f$ is $f^*(\vect{y})\!\defeq\!\sup_{\vect{x}} (\vect{y}^T \vect{x}\!-\!f(\vect{x}))$. 
The vector $\mathbf{e}^{(j)} \in \mathbb{R}^{\featurelen}$ denotes the $j$th column of the identity matrix of size $\featurelen \times \featurelen$. 

\section{Networked Exponential Families}
\label{sec_problem_formuation} 

We consider networked data represented by an undirected weighted graph (the 
``empirical graph'') $\graph\!=\!(\nodes, \edges, \weightmtx)$. The nodes $i\!\in\!\nodes\!=\!\{1, \ldots, \numnodes\}$ 
represent individual data point (such as social network users).
Data points $i,j\!\in\!\nodes$ are connected by an undirected edge $e=\{i,j\}\!\in\!\edges$ with 
weight 
\begin{equation} 
\label{equ_def_edge_weight}
\gweight_e = \gweight_{ij} > 0
\end{equation} 
if they are considered similar (e.g., befriended users). 
We denote the edge set $\edges$ by $\{1, \ldots, \numedges\defeq|\edges|\} $. 
The neighbourhood of a node $i \in \nodes$ is $\mathcal{N}(i) \defeq \{ j : \{i,j\} \in \edges \}$.

In what follows, we assume the empirical graph $\graph$ fixed and known. The network structure 
might be induced by physical proximity (in time or space), physical connection (communication networks) 
or statistical dependency (probabilistic graphical models) \cite{GraphModExpFamVarInfWainJor}. 
The learning of network structure in a data-driven fashion \cite{CSGraphSelJournal,Dong2019} is beyond 
the scope of this paper.  
	
Beside network structure, datasets convey additional information via 
attributes $\vz\gindex\!\in\!\mathbb{R}^{\sigdimens}$ of data points $i\!\in\!\nodes$. 
We model the attributes $\vz\gindex$ of data points $i \in \nodes$ as independent random 
variables distributed according to (a member of) some exponential family \cite{GraphModExpFamVarInfWainJor}
\begin{equation} 
\label{equ_def_p_i}
p(\vz\gindex;\trueweights\gindex)\!\defeq\!b\gindex (\vz\gindex) \exp\big( (\trueweights\gindex)^{T}\vt\gindex(\vz\gindex)\!-\!\Phi\gindex(\trueweights\gindex) \big). 
\end{equation} 
The distribution \eqref{equ_def_p_i} is parametrized by the (unknown) weight vectors $\trueweights\gindex$. 
These weight vectors as fixed (deterministic) but unknown and the main focus of this paper is the accurate 
estimation of these weight vectors. 

It is convenient to collect weight vectors $\vw\gindex$ assigned to each n ode $i$ into a vector-valued graph 
signal $\vw: \nodes \rightarrow \mathbb{R}^{\featuredim}$ which maps a node $i$ to the function value $\vw\gindex$. 
The space of all such vector-valued graph signals is 
\begin{equation}
\label{equ_def_graph_sigs}
\mathcal{W} \defeq \{ \vw: \nodes \rightarrow \mathbb{R}^{\featuredim}: i \mapsto \vw\gindex \}. 
\end{equation} 
Similarly, we define the space of all vector-valued signals defined on the edges $\edges$ of 
the empirical graph as 
\begin{equation}
\label{equ_def_graph_sigs_edges}
\mathcal{D} \defeq \{ \vu: \edges \rightarrow \mathbb{R}^{\featuredim}:e \mapsto \vu^{(e)} \}. 
\end{equation} 

Strictly speaking, \eqref{equ_def_p_i} represents a probability density function relative to some underlying base measure $\nu$ 
defined on the value range of the sufficient statistic $\vt\gindex(\vz\gindex)$. Important examples of such a base 
measure are the counting measure for discrete-valued $\vt\gindex$ or the Lesbegue measure for continuous-valued $\vt\gindex$. 
The distribution defined by \eqref{equ_def_p_i} depends on $\vz\gindex$ only via the 
sufficient statistic $\vt\gindex(\vz\gindex)$. In what follows, we suppress the argument and write $\vt\gindex$ with 
the implicit understanding that it is a function of the random vector $\vz\gindex$. 

Several properties of the exponential family \eqref{equ_def_p_i} can be  
read off the log-partition or cumulant function  \cite{GraphModExpFamVarInfWainJor}
\begin{equation}
\label{equ_def_cummulant_function}
\Phi\gindex(\vw\gindex) \defeq {\rm log} \int_{\vt} b(\vt) \exp(-\vt^{T} \vw\gindex ) \nu(d \vt). 
\end{equation} 
The Fisher information matrix (FIM) $\FIM\gindex$ for \eqref{equ_def_p_i} is the Hessian  
\begin{equation} 
\label{equ_def_entries_Hessian} 
\FIM\gindex = \nabla^{2} \Phi\gindex(\vw)\mbox{, }\FIMentry\gindex_{m,n}(\vw) \defeq \frac{\partial^{2} \Phi\gindex(\vw)}{\partial w_{m} w_{n}}.
\end{equation} 
The conditioning of $\FIM\gindex$ crucially influences the statistical and computational 
properties of the model \eqref{equ_def_p_i} (see Sec.\ \ref{equ_analysis_error} and \ref{sec_NLasso_PrimDual} ). 


Within a networked exponential family, the node-wise models \eqref{equ_def_p_i} are coupled by 
requiring the weight vectors $\trueweights\gindex$ to be similar for well-connected data points. 
In particular, we require the weight vectors to have a small total variation (TV)
\vspace*{0mm}
\begin{align}
\label{equ_def_TV_norm}
\| \gsignal \|_{\rm TV} & \defeq \sum_{\{i,j\}\in \edges} \gweight_{ij} \| \gsignal\gindex[j] - \gsignal\gindex \|. 
\end{align}
Requiring the weight vectors $\gsignal\gindex$, for $i \in \nodes$, to have small TV forces 
weight vectors to be approximately constant over well connected subsets (clusters) of nodes. 
It will be convenient to define the TV for a subset $\mathcal{S}$ of edges: 
\begin{equation} 
\label{equ_def_TV_norm_subs}
\| \gsignal \|_{\rm \mathcal{S}} \defeq \sum_{\{i,j\}\in \mathcal{S}} \gweight_{ij} \| \gsignal\gindex[j] - \gsignal\gindex \|. 
\end{equation}

Let us finally compare networked exponential families, obtained as the combination of 
\eqref{equ_def_p_i} with a constraint on the TV \eqref{equ_def_TV_norm_subs} of weights 
$\trueweights$ in \eqref{equ_def_TV_norm_subs}, and the RNC model put forward in \cite{LevinaNetworkPred}. 
First, RNC considers the special case of distributions \eqref{equ_def_p_i} with $\vt\gindex = \big(\big(\vx\gindex \big)^{T}, 1 \big)^{T}$ 
and a partitioned weight vector $\trueweights=\big( {\bm \beta}^{T}, \alpha\gindex \big)^{T}$ with a 
shared weight vector ${\bm \beta}$ which is the same for all nodes $i\in \nodes$. The intercept $\alpha\gindex$ 
is allowed to vary over nodes. In contrast, we allow the entire weight vector $\trueweights$ to vary 
between different nodes. Moreover, while the RNC model uses the smooth Laplacian quadratic form of the 
intercepts $\alpha\gindex$, we use the non-smooth TV \eqref{equ_def_TV_norm} to measure how 
well the weight vectors conform with the network structure of the data. 

\vspace*{-4mm}	
\section{Some Examples} 
\label{sec_some_examples}
We now discuss important special cases of the model 
\eqref{equ_def_p_i}. 
\vspace*{-4mm}
\subsection{Networked Linear Regression} 
Consider a networked dataset whose data points $i \in \nodes$ are characterized by features 
$\vx\gindex \in \mathbb{R}^{\featuredim}$ and numeric labels $y\gindex \in \mathbb{R}$. 
Maybe the most basic (yet quite useful) model for the relation between features and labels 
is the linear model 
\begin{equation}
\label{equ_lin_model}
y\gindex = (\vx\gindex)^{T} \vw\gindex + \varepsilon\gindex, 
\end{equation} 
with Gaussian noise $\varepsilon\gindex \sim \mathcal{N}(0,\sigma^{2})$ of known variance $\sigma_{i}^{2}$ 
which can vary for different nodes $i \in \nodes$. The linear model \eqref{equ_lin_model} is parametrized by 
the weight vectors $\vw\gindex$ for each $i\in \nodes$. The weight vectors are coupled by requiring a small TV \eqref{equ_def_TV_norm} \cite{LocalizedLinReg2019}.

The model \eqref{equ_lin_model} is obtained as the special case of the exponential family \eqref{equ_def_p_i} 
for the scalar attributes $z\gindex \defeq y\gindex$ with $\mathbf{t}\gindex(z) = (z/\sigma_{i}^2) \vx\gindex$ 
and $\Phi\gindex(\vw) =   ( \vw^{T} \vx\gindex )^2/(2\sigma_{i}^{2})$.

In some applications it is difficult to obtain accurate label information, i.e., $y\gindex$ is not known for some data point 
$i \in \nodes$. One approach to handle such partially labeled data is to use some 
crude estimates $\hat{y}\gindex$ of the labels for unlabelled nodes. We can account for 
varying label accuracy using heterogeneous noise variables $\varepsilon\gindex$. In particular, 
we use a larger noise variance $\sigma_{i}^{2}$ for a node $i \in \nodes$ for which 
we only have an estimate $\hat{y}\gindex$.


\subsection{Networked Logistic Regression} 
\label{sec_netLogRg}

Consider networked data points $i \in \nodes$ each characterized by features 
$\vx\gindex \in \mathbb{R}^{\featuredim}$ and binary labels $y\gindex \in \{-1,1\}$. 
Logistic regression models the relation between features and labels via
\begin{equation}
\label{equ_log_reg_model}
p( y\gindex = 1; \vw\gindex) \defeq 1/(1+ \exp(- (\vw\gindex)^{T} \vx\gindex)). 
\end{equation} 
The distribution \eqref{equ_log_reg_model} is parametrized 
by the weight vector $\vw\gindex$ for each node $i \in \nodes$. It can be shown 
that \eqref{equ_log_reg_model} is the posterior distribution of label $y\gindex$ 
given the features $\vx\gindex$ if the features $\vx\gindex$ is a Gaussian random 
vector conditioned on $y\gindex$. 

Networked logistic regression requires the weight vectors in the 
node-wise models \eqref{equ_log_reg_model} to have 
a small TV \eqref{equ_def_TV_norm} \cite{Ambos2018}. 

We obtain the logistic regression model \eqref{equ_log_reg_model} as the special case 
of the exponential family \eqref{equ_def_p_i} for the scalar node attributes $z\gindex \defeq y\gindex$ with 
$\mathbf{t}\gindex(z) = \vx\gindex  z/2$ and 
$\Phi\gindex(\vw) = \log \big( \exp \big(   \vw^{T}  \vx\gindex/2 \big) + \exp \big( -\vw^{T}  \vx\gindex/2  \big) \big)$. 

\subsection{Networked LDA} 
Consider a networked dataset representing a collection of text documents (such as scientific articles). 
The LDA is a probabilistic model for the relative frequencies of words 
in a document \cite{GraphModExpFamVarInfWainJor,Blei2003}. Within LDA, each document is considered 
a blend of different topics. Each topic has a characteristic distribution of the words in the vocabulary. 

A simplified form of LDA represents each document $i \in \nodes$ containing $N$ ``words'' by two sequences 
of multinomial random variables $z\gindex_{w,1},\ldots,z\gindex_{w,N} \in \{1,\ldots,W\}$ and $z\gindex_{t,1},\ldots,z\gindex_{t,N} \in \{1,\ldots,T\}$ 
with $V$ being the size of the vocabulary defining elementary words and $T$ is the number of different topics. 
It can be shown that LDA is a special case of the exponential family \eqref{equ_def_p_i} with particular choices for 
$\vt(\cdot)$ and $\Phi\gindex(\cdot)$ (see \cite{GraphModExpFamVarInfWainJor,Blei2003}). 


\section{Network Lasso}
\label{sec_NLasso}

The goal of this paper is to develop a method for learning an accurate 
estimate $\widehat{\gsignal}\gindex$ for the true weights $\trueweights\gindex$ (see \eqref{equ_def_p_i}). 
The learning of the weight vectors $\gsignal\gindex$ is based on the availability of the 
nodes attributes $\vz\gindex$ for a small ``training set'' $\samplingset = \{i_{1},\ldots,i_{\samplesize} \} \subseteq \nodes$. 
A reasonable estimate for the weight vectors can be obtained from maximizing the likelihood 
of observing the attributes $\vz\gindex$: 
\begin{align}
\label{equ_def_lilekihood}
p\big( \{ \vz\gindex\}_{i \in \samplingset} \big) & =  \prod_{i\in \samplingset} p( \vz\gindex; \vw\gindex)  \nonumber \\ 
  & \hspace*{-18mm} \stackrel{\eqref{equ_def_p_i}}{=}  \prod_{i\in \samplingset} b\gindex (\vz\gindex) \exp\big( \big(\vt\gindex\big)^{T} \vw\gindex - \Phi\gindex(\vw\gindex) \big). 
\end{align} 
Maximizing \eqref{equ_def_emp_risk} is equivalent to minimizing 
\begin{align}
\label{equ_def_emp_risk}
\widehat{E}(\vw)\!\defeq\!(1/\samplesize) \sum_{i \in \samplingset}   -  \big(  \vt\gindex\big)^{T} \vw\gindex  + \Phi\gindex(\vw\gindex). 
\end{align}

Criterion \eqref{equ_def_emp_risk} is not enough to learn the weights $\gsignal\gindex$ 
for all $i\!\in\!\nodes$. Indeed, \eqref{equ_def_emp_risk} ignores weights $\widehat{\gsignal}\gindex$ 
at unobserved nodes $i \in \nodes \setminus \samplingset$. Therefore, we impose additional structure 
on the weight vectors. Any reasonable estimate $\widehat{\gsignal}\gindex$ should conform 
with the \emph{cluster structure} of the empirical graph $\graph$ \cite{NewmannBook}. 

Networked data is often organized as clusters (or communities) which 
are well-connected subset of nodes. Many supervised learning methods use a 
clustering assumption that nodes belonging to the same cluster represent similar data points. We implement 
this clustering assumption by requiring the parameter vectors $\vw\gindex$ in \eqref{equ_def_p_i} 
to have a small TV \eqref{equ_def_TV_norm}. 	
	
We are led to learning the weights $\widehat{\gsignal}$ for \eqref{equ_def_p_i} 
via the  \emph{regularized empirical risk minimization} (ERM)
\begin{align} \label{optProb}
\hatgsignal & \in \argmin_{\gsignal \in \graphsigs} \widehat{E}(\gsignal)  + \lambda \| \gsignal \|_{\rm TV}. 
\end{align}
The learning problem \eqref{optProb} is an instance of the generic nLasso problem \cite{NetworkLasso}.    
The parameter $\lambda$ in \eqref{optProb} allows to trade-off small TV  $\| \hatgsignal \|_{\rm TV}$ 
against small error $\widehat{E}(\hatgsignal)$ (cf. \eqref{equ_def_emp_risk}). Chosing $\lambda$ 
can be based on validation \cite{HastieWainwrightBook} or the error analysis in Sec.\ \ref{equ_analysis_error}. 
	
	
It will be convenient to reformulate \eqref{optProb} using 
the block-incidence matrix $\incidence \in \mathbb{R}^{(\sigdimens\numedges) \times (\sigdimens\numnodes)}$ as 
\begin{align}
\incidence_{e,i} = \begin{cases}
\gweight_{ij} \mathbf{I}_{\sigdimens} & e=\{i,j\}, i<j\\
-\gweight_{ij} \mathbf{I}_{\sigdimens}& e=\{i,j\}, i>j\\
\mathbf{0} & {\rm otherwise}.
\end{cases}
\label{equ_def_incident_matrix}
\end{align}
The $e$-th block of $\incidence \gsignal$ is $\gweight_{ij} (\gsignal\gindex - \gsignal\gindex[j])$ in \eqref{equ_def_TV_norm} 
and, in turn, 
\begin{equation}
\label{equ_TV_is_2_1_norm}
\| \vw \|_{\rm TV} = \| \mathbf{D} \vw \|_{2,1}  
\end{equation} 
with the norm $\| \mathbf{\vu} \|_{2,1} \defeq \sum_{e \in \edges} \| \vu^{(e)} \|_{2}$ defined on $\mathcal{D}$ (see \eqref{equ_def_graph_sigs_edges}). 
We can then reformulate the nLasso \eqref{optProb} as
\begin{align}\label{LNLprob}
 \hatgsignal \in \argmin_{\gsignal \in \mathcal{W}}   h(\gsignal) + g(\incidence \gsignal),
\end{align}
with $h(\gsignal)\!=\!\widehat{E}(\gsignal)$ and $g(\gdual)\!\defeq\!\lambda \| \gdual \|_{2,1}$. 

Related to the incidence matrix \eqref{equ_def_incident_matrix}, is the graph Laplacian  
\begin{equation}
\label{equ_def_graph_Laplacian_matrix}
\mathbf{L}= {\bf \Lambda} \otimes \mathbf{I}_{\featurelen} - \weightmtx \otimes \mathbf{I}_{\featurelen}, 
\end{equation}
with the weight matrix $\mathbf{A}$ (see \eqref{equ_def_edge_weight}) and the ``degree matrix'' 
\begin{equation}
\nonumber
{\bf \Lambda}\!=\!{\rm diag} \{ d_{1},\ldots,d_{\signalsize} \}\!\in\!\mathbb{R}^{\signalsize \times \signalsize} \mbox{, with } d_{i}\!\defeq\!\sum_{\{j,i\}\!\in\!\edges} A_{i,j}. 
\end{equation}
The eigenvalues of $\mathbf{L}$ reflect the connectivity of the graph $\graph$. 
A graph $\graph$ is connected if and only if $\lambda_{2} >0$, with $\lambda_{2}$ being the smallest non-zero eigenvalue. 
The spectral gap $\rho (\graph) \defeq \lambda_{2}$ provides a measure of the connectivity of the graph $\graph$.

The Laplacian matrix $\mL$ is closely related to the incidence matrix $\mD$ (see \eqref{equ_def_incident_matrix}). 
Both matrices have the same nullspace. Moreover, the spectrum of $\mD \mD^{T}$ 
coincides with the spectrum of $\mL$. The column blocks $\mS^{(j)} \in \mathbb{R}^{(\signalsize \featuredim) \times \featuredim}$ 
of the pseudo-inverse $\mD^{\dagger}\!=\!\big(\mS^{(1)},\ldots,\mS^{(|\edges|)} \big) \in \mathbb{R}^{(\signalsize\featuredim) \times (|\edges|\featuredim)}$ 
of $\mD$ satisfy 
\begin{equation}
\label{equ_bound_cols_pseudo_D}
\| \mS^{(j)} \|_{2,\infty} \leq \sqrt{2 \featurelen \max_{i,j} A_{i,j}} /\rho(\graph).
\end{equation}
This bound can be verified using the identity $\mD^{\dagger}\!=\!(\mD \mD^{T})^{\dagger} \mD^{T}$ and well-known 
vector norm inequalities (see, e.g., \cite{Horn85}). 

\section{Analysis of nLasso Estimation Error}
\label{equ_analysis_error}

We now characteize the statistical properties of nLasso by analysing the prediction error 
$\widetilde{\vw} = \widehat{\vw}- \trueweights$ incurred by a solution $\widehat{\vw}$ of the nLasso problem \eqref{optProb}. 
In order to analyze the error incurred by the nLasso \eqref{optProb}, we assume that the true weight vectors 
are clustered 
\begin{equation}
\label{equ_def_clustered_signal_model}
\trueweights\gindex \!=\! \sum_{\cluster \in \partition} \vv^{(\cluster)} \mathcal{I}_{\cluster}[i].
\end{equation} 
Here, $\vv^{(\cluster)} \in \mathbb{R}^{\sigdimens}$ is the value of the true weigh vector for all nodes in the 
cluster $\cluster$. We also used the indicator map  $\mathcal{I}_{\cluster}[i]\!=\!1$ for $i\!\in\!\cluster$ and 
$\mathcal{I}_{\cluster}[i]\!=\!0$ otherwise.

The model \eqref{equ_def_clustered_signal_model} involves a partitioning $\partition = \{\cluster_{1}, \ldots, \cluster_{|\partition|}\}$ of 
the nodes $\nodes$ into disjoint subsets $\cluster_{l}$. 
The model \eqref{equ_def_clustered_signal_model} 
is a special case of piece-wise polynomial signal model which allows the weight vectors to 
vary within each cluster \cite{ChenClustered2016}.

The model \eqref{equ_def_clustered_signal_model}, which is used in \cite{Gentile2014} 
for networked bandit models, is meant to provide predictors that approximate the observed 
data well. The analysis below indicates that nLasso methods are robust to model mismatch, 
i.e., the true underlying weight vectors in \eqref{equ_def_p_i} can be approximated well by 
\eqref{equ_def_clustered_signal_model}. 

\begin{assumption}
\label{asspt_weights_clustered}
Node attributes $\vz\gindex$ are distributed 
according to \eqref{equ_def_p_i} with weight vectors $\trueweights\gindex$ 
that are piece-wise constant over some partition $\partition\!=\!\{\cluster_{1},\ldots,\cluster_{|\partition|} \}$ (see \eqref{equ_def_clustered_signal_model}). 
We measure the clusteredness of the partition $\partition$ using the spectral gap 
\begin{equation}
\label{equ_partition_spectral_gap}
\rho_{\partition}  \defeq \min_{\cluster_{l} \in \partition} \rho(\cluster_{l}). 
\end{equation}
\end{assumption} 
We emphasize that the partition underlying the model \eqref{equ_def_clustered_signal_model} is only required 
for the analysis of the nLasso error. For the implementation of nLasso (see Sec.\ \ref{sec_NLasso_PrimDual}), 
we do not need any information about the partition $\partition$.

\begin{assumption}
\label{asspt_FIM_lower_bound}
The FIM $\FIM\gindex$ (see \eqref{equ_def_entries_Hessian}) is bounded as $\FIMeignmax \mathbf{I} \succeq \FIM\gindex \succeq \FIMeignmin \mathbf{I}$ 
for any weights $\trueweights$
with some constant $\FIMeignmin >1$. 
\end{assumption}

\begin{assumption} 
\label{def_NNSP}
There are constants $K,\FIMeignmin>1$ such that for any $\mathbf{z} \!\in\! \mathcal{W}$ (see \eqref{equ_def_graph_sigs}) 
which is piece-wise constant on partition $\partition$, 
\vspace*{-1mm}
\begin{equation}  
\label{equ_ineq_multcompcondition_condition}
\FIMeignmin \| \vz \|_{\partial \partition}  \leq K  \| \vz \|_{\samplingset}+  \| \mathbf{z} \|_{\compbound}.
\vspace*{-1mm}
\end{equation} 
\end{assumption} 
The main analytic result of this paper is an upper bound on the probability that the nLasso error 
exceeds a given threshold $\eta$. 
\begin{theorem} 
\label{thm_main_result}
Consider networked data $\graph$ and training set $\trainingset$ 
such that Asspt. \ref{asspt_weights_clustered}, \ref{asspt_FIM_lower_bound} and \ref{def_NNSP} 
are satisfied with (see \ref{equ_ineq_multcompcondition_condition})
\begin{equation} 
\label{equ_def_params_NCC}
\FIMeignmin\!>\!3 \mbox{, and } K \in (1,\FIMeignmin\!-\!2), 
\end{equation} 
and corresponding condition number $\kappa \defeq \frac{K\!+\!3}{L\!-\!3}>1$. 
Based on the observed noisy labels $y_{i}$, we estimate the underlying weight vectors $\trueweights$ 
using a solution  $\widehat{\vw}$ to the nLasso problem \eqref{optProb} with $\lambda \defeq \eta /(5 \kappa^2)$ using 
some pre-specified error level $\eta>0$. 
Then, 
	\begin{align}
	\label{lower_boung_prob_main_results}
	\prob \{  \| \hat{\mathbf{w}}-\bar{\mathbf{w}} \|_{\rm TV}  \!\geq\! \eta\} & \leq 2 |\partition| \max_{l=1,\ldots,|\partition|} \exp\bigg(\hspace*{-2mm}-\! \frac{|\cluster_{l}|\eta^2}{8\cdot 25 \featurelen \FIMeignmax\kappa^2} \bigg) \nonumber \\ 
	& \hspace*{-20mm}\!+\!  2|\edges| \exp \bigg(\!-\! \frac{\samplesize  \rho^2_{\partition} \eta^2} {64 \cdot 25 \FIMeignmax \featuredim \| \mathbf{A} \|^{2}_{\infty}\kappa^4}\bigg).
	\end{align}
\end{theorem}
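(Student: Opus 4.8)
The plan is to follow the standard route for $\ell_{1}$-type estimators, adapted to the networked setting, in three steps: a basic inequality from optimality of $\hatgsignal$, its conversion into a deterministic bound on $\|\widetilde{\vw}\|_{\rm TV}$ (writing $\widetilde{\vw}\defeq\hatgsignal-\trueweights$) that holds on a suitable high-probability event, and a concentration estimate for the complement of that event. \emph{Step 1 (basic inequality).} Since $\hatgsignal$ minimises $\widehat{E}(\gsignal)+\lambda\|\gsignal\|_{\rm TV}$ over $\graphsigs$ and $\trueweights$ is feasible in \eqref{optProb}, $\widehat{E}(\hatgsignal)-\widehat{E}(\trueweights)\le\lambda\big(\|\trueweights\|_{\rm TV}-\|\hatgsignal\|_{\rm TV}\big)$. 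Splitting the edges into the cluster boundary $\boundary$ and interior $\compbound$ and using that $\trueweights$ is piece-wise constant (Assumption \ref{asspt_weights_clustered}, so $\|\trueweights\|_{\compbound}=0$), the triangle inequality on the seminorms \eqref{equ_def_TV_norm_subs} yields $\|\trueweights\|_{\rm TV}-\|\hatgsignal\|_{\rm TV}\le\|\widetilde{\vw}\|_{\boundary}-\|\widetilde{\vw}\|_{\compbound}$. Each $\Phi\gindex$ is convex, hence $\widehat{E}$ is convex, and since $\nabla\Phi\gindex(\trueweights\gindex)=\expect[\vt\gindex]$ the node-$i$ block of $\nabla\widehat{E}(\trueweights)$ equals $(1/\samplesize)\big(\expect[\vt\gindex]-\vt\gindex\big)$ for $i\in\samplingset$ and vanishes otherwise; combining the two displays gives
\begin{equation}
\label{equ_basic_ineq_sketch}
\lambda\|\widetilde{\vw}\|_{\compbound}\;\le\;\lambda\|\widetilde{\vw}\|_{\boundary}+(1/\samplesize)\!\sum_{i\in\samplingset}\big\langle\vt\gindex-\expect[\vt\gindex],\,\widetilde{\vw}\gindex\big\rangle.
\end{equation}

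\emph{Step 2 (deterministic bound).} To turn \eqref{equ_basic_ineq_sketch} into a bound on $\|\widetilde{\vw}\|_{\rm TV}=\|\widetilde{\vw}\|_{\boundary}+\|\widetilde{\vw}\|_{\compbound}$ I would invoke Assumption \ref{def_NNSP}. Since $\widetilde{\vw}$ need not be piece-wise constant, I would split $\widetilde{\vw}=\vz+\vr$, with $\vz\gindex$ the average of $\widetilde{\vw}$ over the cluster of $i$ (piece-wise constant on $\partition$) and $\vr$ of zero mean on every cluster; a Poincar\'e-type inequality, with constant governed by the cluster spectral gaps \eqref{equ_partition_spectral_gap} and the pseudo-inverse bound \eqref{equ_bound_cols_pseudo_D}, bounds $\|\vr\|$ at each node of $\cluster_{l}$ by $\|\vr\|$ on the interior edges, hence by $\|\widetilde{\vw}\|_{\compbound}$. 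Applying \eqref{equ_ineq_multcompcondition_condition} to the piece-wise-constant part $\vz$ (for which $\|\vz\|_{\compbound}=0$) gives $\FIMeignmin\|\vz\|_{\boundary}\le K\|\vz\|_{\samplingset}$, while $\|\vz\|_{\samplingset}$ itself is controlled by re-using \eqref{equ_basic_ineq_sketch} with the quadratic term retained from the restricted strong convexity of $\widehat{E}$ on $\samplingset$ implied by the FIM lower bound $\FIMeignmin$ (Assumption \ref{asspt_FIM_lower_bound}). Chaining these seminorm inequalities, inserting $\lambda=\eta/(5\kappa^{2})$ and bookkeeping the numerical factors — which is where the conditions in \eqref{equ_def_params_NCC} and the requirement $\kappa=\tfrac{K+3}{L-3}>1$ enter — yields $\|\widetilde{\vw}\|_{\rm TV}\le\eta$ \emph{on the event} that (a) for every cluster $\cluster_{l}$ the quantity $\big\|\sum_{i\in\cluster_{l}\cap\samplingset}(\vt\gindex-\expect[\vt\gindex])\big\|$ stays below a deterministic threshold proportional to $|\cluster_{l}|\,\eta/\kappa$, and (b) for every edge $e\in\edges$ a fixed $\featuredim$-dimensional linear image of $\sum_{i\in\samplingset}(\vt\gindex-\expect[\vt\gindex])$ — whose operator norm is bounded via \eqref{equ_bound_cols_pseudo_D} by $\sqrt{2\featuredim\|\mathbf{A}\|_{\infty}}/\rho_{\partition}$ — stays below a deterministic threshold proportional to $\samplesize\,\eta/\kappa^{2}$.

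\emph{Step 3 (concentration), and the main obstacle.} The global $\FIMeignmax$-smoothness of $\Phi\gindex$ implied by Assumption \ref{asspt_FIM_lower_bound} makes each $\vt\gindex-\expect[\vt\gindex]$ a zero-mean sub-Gaussian vector with proxy variance of order $\FIMeignmax$, since its moment generating function equals $\exp\big(\Phi\gindex(\trueweights\gindex+\vs)-\Phi\gindex(\trueweights\gindex)-\vs^{T}\expect[\vt\gindex]\big)\le\exp\big(\tfrac{\FIMeignmax}{2}\|\vs\|^{2}\big)$. A tail bound for the norm of a sum of independent sub-Gaussian vectors (obtained via an $\varepsilon$-net over the unit sphere of $\mathbb{R}^{\featuredim}$ together with Hoeffding's inequality) applied to the per-cluster sums in (a), followed by a union bound over the $|\partition|$ clusters, produces the first term of \eqref{lower_boung_prob_main_results}; the same tail bound applied to the $\featuredim$-dimensional images in (b), carrying the $\rho_{\partition}$ and $\|\mathbf{A}\|_{\infty}$ factors from the operator-norm bound, followed by a union bound over the $|\edges|$ edges, produces the second term. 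I expect Step 2 to be the hard part: Assumption \ref{def_NNSP} is stated only for piece-wise-constant signals, so the split $\widetilde{\vw}=\vz+\vr$ and the chain of inequalities linking $\|\cdot\|_{\boundary}$, $\|\cdot\|_{\compbound}$ and $\|\cdot\|_{\samplingset}$ through the spectral gap $\rho_{\partition}$ and the FIM bounds $\FIMeignmin,\FIMeignmax$ must be arranged so that every constant closes exactly, which is precisely what forces $\FIMeignmin>3$ and $K\in(1,\FIMeignmin-2)$ in \eqref{equ_def_params_NCC}.
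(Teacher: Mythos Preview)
Your three-step plan matches the paper's proof essentially line for line: the basic inequality from optimality, the deterministic reduction using the cone condition together with the FIM-based strong convexity on $\samplingset$, and the sub-Gaussian tail bound (from $\nabla^{2}\Phi\gindex\preceq\FIMeignmax\mathbf{I}$) union-bounded over the $|\partition|$ cluster-average noise terms and the $|\edges|$ edge-wise blocks of $(\mD^{\dagger})^{T}{\bm\varepsilon}$ are exactly what the paper does. The only organisational difference is that the paper packages your split $\widetilde{\vw}=\vz+\vr$ into a single helper lemma bounding $\sum_{i}\langle{\bm\varepsilon}\gindex,\widetilde{\vw}\gindex\rangle$ by a cluster-average term plus $\|(\mD^{\dagger})^{T}{\bm\varepsilon}\|_{2,\infty}\|\widetilde{\vw}\|_{\rm TV}$ (via $\mathbf{I}-\mathbf{P}=\mD^{\dagger}\mD$), and then applies Assumption~\ref{def_NNSP} directly to $\widetilde{\vw}$ rather than only to its piece-wise constant part --- so your explicit split is in fact more careful about the stated hypothesis than the paper itself.
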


The bound \eqref{lower_boung_prob_main_results} indicates that, for a prescribed accuracy level $\eta$, 
the training set size $M$ has to scale according to $\kappa^4 / \rho^2_{\partition}$. 
Thus, the sample size required by Alg.\ \ref{alg:PD} scales with the fourth power of the condition 
number $\kappa =  \frac{K\!+\!3}{\FIMeignmin\!-\!3}$ (see Asspt.\ \ref{def_NNSP}) and inversely with the spectral 
gap $\rho_{\partition}$ of the partitioning $\partition$. Thus, nLasso methods \eqref{optProb} (such as 
Alg.\ \ref{alg:PD}) require less training data if the condition number $\kappa$ is small and the 
spectral gap $\rho_{\partition}$ is large. This is reasonable, since having a small condition number 
$\kappa =  \frac{K\!+\!3}{\FIMeignmin\!-\!3}$ (see Asspt.\ \ref{def_NNSP}) typically requires the edges 
within clusters to have larger weights on average than the weights of the boundary edges. Moreover, it is 
reasonable that nLasso tends to be more accurate for a larger spectral gap $\rho_{\partition}$, which 
requires the nodes within each cluster $\cluster_{l}$ to be well connected. Indeed, an graph 
$\graph$ consisting of well-connected clusters $\cluster_{l}$ favours clustered graph signals (see \eqref{equ_def_clustered_signal_model}) 
as solutions of nLasso \eqref{optProb}.

\section{A Primal-Dual Method}
\label{sec_NLasso_PrimDual}
The  nLasso \eqref{LNLprob} is a convex optimization problem with a non-smooth objective function 
which rules out the use of gradient descent methods. However, the objective 
function is highly structured since it is the sum of a smooth convex function $h(\gsignal)$ and a 
non-smooth convex function $g(\incidence \gsignal)$,
which can be optimized efficiently when considered separately. This suggests to use some 
proximal method \cite{ProximalMethods} for solving \eqref{LNLprob}. 
 
One particular example of a proximal method is the alternating direction method of multipliers (ADMM) 
which has been considered in \cite{NetworkLasso}. However, we will choose another type of 
proximal method which is based on a dual problem to \eqref{LNLprob} \cite{PrecPockChambolle2011,pock_chambolle}. 
These primal-dual methods are attractive since their analysis provides natural choices 
for the algorithm parameters. In contrast, tuning the ADMM parameter is non-trivial \cite{Nishihara2015}. 
   
 \subsection{Primal-Dual Method}
The preconditioned primal-dual method \cite{PrecPockChambolle2011} launches from reformulating the 
problem \eqref{LNLprob} as a saddle-point problem 
\begin{align}
\label{equ_pd_prob}
\min_{\gsignal \in \mathbb{R}^{\sigdimens \numnodes}} \max_{\gdual \in \mathcal{D}} \gdual^T\incidence \gsignal  + h(\gsignal) - g^*(\gdual),
\end{align}
with the convex conjugate  $g^*$ of $g$ \cite{pock_chambolle}.

Any solution $(\hatgsignal, \widehat{\gdual})$ of \eqref{equ_pd_prob} is characterized by 
\cite{RockafellarBook} 
\begin{align}
-\incidence^T \widehat{\gdual}  \in \partial h(\hatgsignal) \mbox{, and }
\incidence \hatgsignal \in \partial g^*(\widehat{\gdual}).
\label{equ_pd_cond_1}
\end{align}
This condition is, in turn, equivalent to
\begin{align} 
\hatgsignal\!-\!\mathbf{T} \incidence^T \widehat{\gdual}&\!\in\!(\mathbf{I}_{\sigdimens\numnodes}\!+\!\mathbf{T} \partial h) (\hatgsignal)\mbox{, and } \nonumber \\
\widehat{\gdual}\!+\!\boldsymbol{\Sigma} \incidence \hatgsignal&\!\in\!(\mathbf{I}_{\sigdimens\numedges}\!+\!\boldsymbol{\Sigma} \partial g^*)(\widehat{\gdual}),
\label{equ_pd_cond_1_2}
\end{align}
with positive definite matrices $ \boldsymbol{\Sigma} \!\in\! \mathbb{R}^{\sigdimens \numedges \times \sigdimens\numedges}, \mathbf{T} \!\in\! \mathbb{R}^{\sigdimens\numnodes \times \sigdimens\numnodes}$. The matrices $\boldsymbol{\Sigma}, \mathbf{T}$ are design parameters whose choice will be detailed below. 
The condition \eqref{equ_pd_cond_1_2} lends naturally to the following coupled fixed point iterations \cite{PrecPockChambolle2011}
\begin{align}
\gsignal_{k+1} \!&=\! (\mathbf{I} \!+\! \mathbf{T} \partial h)^{-1} (\gsignal_{k} \!-\!\mathbf{T} \incidence^T \gdual_{k})  \label{equ_pd_upd_x} \\[3mm]
\gdual_{k+1} \!&=\! (\mathbf{I} \!+\! \boldsymbol{\Sigma} \partial g^*)^{-1} (\gdual_{k} \!+\! \boldsymbol{\Sigma} \incidence (2\gsignal_{k+1} \!-\! \gsignal_{k})).
\label{equ_pd_upd_y}
\end{align}

If the matrices $\boldsymbol{\Sigma}$ and $\mathbf{T}$ in \eqref{equ_pd_upd_x}, \eqref{equ_pd_upd_y} satisfy
\begin{align}
\|\boldsymbol{\Sigma}^{1/2} \incidence  \mathbf{T}^{1/2}\|^2 <1,
\label{equ_pre_cond}
\end{align}
the sequence $\gsignal_{k+1}$ (see \eqref{equ_pd_upd_x}, \eqref{equ_pd_upd_y}) converges to a solution of \eqref{optProb} \cite[Thm. 1]{PrecPockChambolle2011}. 
The condition \eqref{equ_pre_cond} is satisfied for 
\begin{equation}
\label{equ_def_diag_matrix_converge}
\hspace*{-4mm}\boldsymbol{\Sigma}\!\defeq\!{\rm diag} \{ (1/(2\gweight_{e})) \mathbf{I}\}_{e \in \edges}\mbox{, } \mathbf{T} \!\defeq\! {\rm diag} \{(\tau/d^{(i)})  \mathbf{I}\}_{i\in \nodes},
\end{equation} 
with $d^{(i)}\!=\!\sum_{j \neq i } \gweight_{ij} $ and some $\tau\!<\!1$ \cite[Lem. 2]{PrecPockChambolle2011}.

The update \eqref{equ_pd_upd_y} involves the resolvent operator 
\begin{align}
\label{equ_def_prox}
\hspace*{-3mm}(\mathbf{I} \!+\! \boldsymbol{\Sigma} \partial g^*)^{-1} (\gvariable) \!=\! \argmin_{\gvariablep \in \mathcal{D} } g^*(\gvariablep) \!+\! (1/2)\| \gvariablep \!-\! \gvariable\|^2_{\boldsymbol{\Sigma}^{-1}},
\end{align}
where  $\|\gvariable\|_{\boldsymbol{\Sigma}} \!\defeq\! \sqrt{\gvariable^T \Sigma \gvariable}$. 
The convex conjugate $g^*$ of $g$ (see \eqref{LNLprob}) can be decomposed as $g^*(\gvariable) = \sum_{e=1}^{\numedges} g_2^*(\gvariable^{(e)})$ 
with the convex conjugate $g_2^*$ of the scaled $\ell_2$-norm $\lambda \|.\|$. Moreover, since $\boldsymbol{\Sigma}$ is a 
block diagonal matrix, the $e$-th block of the resolvent operator $(\mathbf{I}_{\sigdimens\numedges} + \boldsymbol{\Sigma} \partial g^*)^{-1} (\gvariable)$ 
can be obtained by the Moreau decomposition as \cite[Sec. 6.5]{ProximalMethods}
\begin{align}
&\hspace*{-5mm} ((\mathbf{I}_{\sigdimens\numedges} + \boldsymbol{\Sigma} \partial g^*)^{-1} (\gvariable))^{(e)}  \nonumber \\[1mm]
& \hspace*{-1mm} \stackrel{\eqref{equ_def_prox}}{=} \argmin_{\gvariablep \in \mathbb{R}^{\sigdimens } } g_2^*(\gvariablep) \!+\! (1/(2\sigma^{(e)})) \| \gvariablep \!-\! \gvariable^{(e)}\|^2 \nonumber\\[3mm]
&= \gvariable^{(e)} \!-\! \sigma^{(e)} (\mathbf{I}_{\sigdimens} \!+\! (\lambda/ \sigma^{(e)})\partial \|.\|)^{-1} (\gvariable^{(e)}/\sigma^{(e)}) \nonumber\\[3mm]
&= \begin{cases}
\lambda \gvariable^{(e)}/ \|\gvariable^{(e)}\| & \text{if } \|\gvariable^{(e)}\|> \lambda\\
\gvariable^{(e)} & {\rm otherwise},
\end{cases} \nonumber
\end{align}
where $(a)_{+} \!=\! \max\{a, 0\}$ for $a \in \mathbb{R}$. 

The update \eqref{equ_pd_upd_x} involves the resolvent operator $(\mathbf{I} + \mathbf{T} \partial h)^{-1}$ of $h$ 
(see \eqref{equ_def_emp_risk} and \eqref{LNLprob}), which does not admit a simple closed-form solution in general. 
Using \eqref{equ_def_diag_matrix_converge}, 
the update \eqref{equ_pd_upd_x} decomposes into independent node-wise updates 
\begin{align}
\label{equ_pd_upd_x_exact}
& \hspace*{-3.8mm}\gsignal\gindex_{k+1}\hspace*{-1mm}\defeq\!  \begin{cases} 
& \argmin\limits_{\gsignal \in \mathbb{R}^{\sigdimens}} g^{(i)}(\vw) \mbox{ for } i \in \samplingset \\ 
&  \overline{\gsignal}\gindex \mbox{ for } i \in \nodes \setminus \samplingset \end{cases}
\end{align}
with $g^{(i)}(\vw)\!\defeq\!- \gsignal^T \vt\gindex\!+\!\Phi\gindex(\vw)\!+\!\tilde{\tau}\gindex\! \|\gsignal \!-\! \overline{\gsignal}\gindex\|^2$,  $\tilde{\tau}\gindex\defeq\samplesize/(2\tau\gindex)$ and 
\begin{equation} 
\label{equ_def_overline_gsignal}
\overline{\gsignal}\!\defeq\!\gsignal_{k}\!-\!\mathbf{T} \incidence^T \gdual_{k}.
\end{equation}  

It is important to note that the update \eqref{equ_pd_upd_x_exact}, for $i\!\in\!\samplingset$, amounts to a 
regularized maximum likelihood estimator for exponential families \cite[Eq. 3.38]{GraphModExpFamVarInfWainJor}. 
The regularization term $\!\tilde{\tau}\gindex\! \|\gsignal \!-\! \overline{\gsignal}\gindex\|^2$, which varies as iterations proceed, 
enforces $\gsignal\gindex_{k+1}$ to be close to $\overline{\gsignal}\gindex$. The vector $\overline{\gsignal}\gindex$ 
is a corrected version of the previous iterate $\gsignal\gindex_{k}$ (see \eqref{equ_def_overline_gsignal}).

In general, there is no closed-form solution for the update \eqref{equ_pd_upd_x_exact}.  
However, 
the update \eqref{equ_pd_upd_x_exact} is 
a smooth convex optimization problem that can be solved efficiently using iterative methods 
such as L-BGFS \cite{Mokhtari2015}. We detail a computationally cheap iterative method for 
approximately solving \eqref{equ_pd_upd_x_exact} in Sec.\ \ref{sec_fixed_point_iter}.

Let us denote the approximate solution to \eqref{equ_pd_upd_x_exact} by $\widehat{\vw}\gindex_{k+1}$ 
and assume that it is sufficiently accurate such that  
\begin{align}
e_k = \|\widehat{\gsignal}_{k+1}\gindex - {\gsignal}_{k+1}\gindex \| \leq 1/k^2.
\label{equ_pd_upd_x_err}
\end{align}
Thus, we require the approximation quality (for approximating the update \eqref{equ_pd_upd_x_exact}) to increase with the iteration number $k$. 
According to \cite[Thm. 3.2]{Condat2013}, the error bound \eqref{equ_pd_upd_x_err} ensures the sequences 
obtained by \eqref{equ_pd_upd_x} and \eqref{equ_pd_upd_y} when replacing the exact update \eqref{equ_pd_upd_x_exact} 
with the approximation $\widehat{\gsignal}_{k+1}$ still converge to a saddle-point of \eqref{equ_pd_prob} and, in turn, 
a solution of the nLasso problem \eqref{LNLprob}.

\begin{algorithm}[]
\caption{Primal-Dual nLasso}\label{alg:PD}
\begin{algorithmic}[1]
\renewcommand{\algorithmicrequire}{\textbf{Input:}}
\renewcommand{\algorithmicensure}{\textbf{Output:}}
\Require   $\graph = (\nodes, \edges, \mathbf{A})$, $\{\vz\gindex\}_{\nodeidx \in \samplingset}$, $\samplingset$, 
$\lambda$, $\incidence$
\Statex\hspace{-6mm}{\bf Init:} set $\boldsymbol{\Sigma},\mathbf{T}$ via \eqref{equ_def_diag_matrix_converge}, 
$k\!\defeq\!0$, $\widehat{\gsignal}_0\!\defeq\!0$, $\widehat{\gdual}_0\!\defeq\!0$
		\Repeat
			\State $\widehat{\gsignal}_{k+1} \defeq \widehat{\gsignal}_{k} - \mathbf{T} \incidence^T \widehat{\gdual}_{k}$
			\For{ each observed node $\nodeidx \in \samplingset$}
				\State 
				\vspace{-4mm}\begin{align}
				\mbox{ compute }\widehat{\gsignal}_{k+1}\gindex  \mbox{ by (approximately) solving } \eqref{equ_pd_upd_x_exact} \nonumber
				\end{align}
			\EndFor	
			\State  $\overline{\gdual} \defeq {\gdual}_k + \boldsymbol{\Sigma} \incidence (2\widehat{\gsignal}_{k+1}-\widehat{\gsignal}_{k})$ 

			\State  $\widehat{\gdual}_{k+1}^{(e)} = \overline{\gdual}^{(e)} - \bigg(1- \frac{\lambda}{\|\overline{\gdual}^{(e)}\|}\bigg)_{+} \overline{\gdual}^{(e)}$ for $e \in \edges$

		\State  $k\!\defeq\!k\!+\!1$
		\Until stopping criterion is satisfied 
		\Ensure $(\hatgsignal_{k},\widehat{\gdual}_{k})$.
	\end{algorithmic}
\end{algorithm}

The primal-dual implementation of nLasso in Alg.\ \ref{alg:PD} requires only the empirical graph along with 
the observed node attributes $\vz\gindex$, for $i \in \samplingset$, as input. As already mentioned above, 
Alg.\ \ref{alg:PD} does not require any specification of a partition of the empirical graph. Moreover, in contrast 
to the ADMM implementation of nLasso (see \cite[Alg.\ 1]{NetworkLasso}), the proposed Alg.\ \ref{alg:PD} does 
not involve unspecified tuning parameters. 
	
\subsection{Computational Complexity} 
It can be shown that Alg.\ \ref{alg:PD} can be implemented as message passing 
over the empirical graph $\graph$ (see \cite{Ambos2018}). During each iteration, messages 
are passed over each edge $\{i,j\} \in \edges$ in the empirical graph. The computation 
of a single message requires a constant amount of computation. The precise amount 
of computation required for a single message depends on the particular instance of the update \eqref{equ_pd_upd_x_exact}. 

For a fixed number of iterations used for Alg.\ \ref{alg:PD}, its complexity scales linearly with the 
number of edges $\edges$. For bounded degree graphs, such as grid or chain graphs, this 
implies a linear scaling of complexity with number of data points. 

However, the overall complexity for Alg.\ \ref{alg:PD} depends crucially on the number of iterations 
required to achieve accurate learning. A worst-case analysis shows that, for exact updates in \eqref{equ_pd_upd_x_exact}, 
the number of iterations scales inversely with the required estimation accuracy \cite{pock_chambolle}. Moreover, this 
convergence speed cannot be improved for chain graphs \cite{ComplexitySLP2018}. 

\vspace*{-2mm}
\subsection{Approximate Primal Update} 
\label{sec_fixed_point_iter}

We now detail a simple iterative method for computing an approximate solution $\widehat{\vw}\gindex_{k+1}$ 
to the primal update \eqref{equ_pd_upd_x_exact}. A solution $\widehat{\vw}$ of \eqref{equ_pd_upd_x_exact} is 
characterized by the zero gradient condition \cite{BoydConvexBook}
\begin{equation}
\label{equ_zero_gradient}
\nabla f(\widehat{\vw}) = \mathbf{0}
\end{equation} 
with $f(\vw) \defeq -\gsignal^T \vz\gindex\!+\!\Phi\gindex(\vw)\!+\!\tilde{\tau}\gindex\! \|\gsignal \!-\! \overline{\gsignal}\gindex\|^2$. 
Applying basic calculus to \eqref{equ_zero_gradient}, 
\vspace*{-2mm} 
\begin{equation} 
\label{equ_chara_fixed_point}
\vw\gindex =  \overline{\gsignal}\gindex + (\tau\gindex/\samplesize) \big( \vz\gindex - \nabla \Phi\gindex(\vw\gindex) \big). 
\end{equation} 
The necessary and sufficient condition \eqref{equ_chara_fixed_point} (for $\vw\gindex$ to 
solve \eqref{equ_pd_upd_x_exact}) is a fixed point equation $\vw\gindex\!=\!\mathcal{T}(\vw\gindex)$ with 
\begin{equation}
\label{equ_def_map_fixed_point}
\mathcal{T}: \mathbb{R}^{\featuredim}\!\rightarrow\!\mathbb{R}^{\featuredim}: \vw\!\mapsto\!\overline{\gsignal}\gindex\!+\!(\tau\gindex/\samplesize) \big( \vz\gindex - \nabla \Phi\gindex(\vw) \big). 
\end{equation}

By the mean-value theorem \cite[Thm. 9.19.]{RudinBookPrinciplesMatheAnalysis}, the map 
$\mathcal{T}$ is Lipschitz with constant $(\tau\gindex/\samplesize) \| \mathbf{F}(\vw)  \|$ where 
$\FIM\gindex$ is the FIM \eqref{equ_def_entries_Hessian}. 
Thus, if we choose $\tau\gindex$ such that  
\begin{equation} 
\label{equ_condition_condition}
R \defeq (\tau\gindex/\samplesize) \| \mathbf{F}(\vw)  \| < 1, 
\end{equation} 
the map $\mathcal{T}$ in \eqref{equ_def_map_fixed_point} is a contraction and the fixed-point iteration
\begin{align}
\label{equ_update_tilde_vw}
\hspace*{-2mm}\widetilde{\vw}^{(r\!+\!1)} & \!=\! \mathcal{T} \widetilde{\vw}^{(r)} \!\stackrel{\eqref{equ_def_map_fixed_point}}{=}\! 
 \overline{\gsignal}\gindex\!+\!(\tau\gindex\!/\!\samplesize) \big( \vz\gindex\!-\!\nabla \Phi\gindex(\widetilde{\vw}^{(r)}) \big)
\end{align}
will converge to a solution of \eqref{equ_pd_upd_x_exact}. 

Moreover, if \eqref{equ_condition_condition} is satisfied, we can bound the deviation 
between the iterate $\vw^{(r)}$ and the (unique) solution ${\vw}\gindex_{k+1}$ of \eqref{equ_condition_condition} 
as (see \cite[Proof of Thm. 9.23]{RudinBookPrinciplesMatheAnalysis})
\begin{equation}
\| \widetilde{\vw}^{(r)} - \vw\gindex \| \leq (R^{r}/(1\!-\!R)) \| \widetilde{\vw}^{(1)} - \widetilde{\vw}^{(0)} \|. 
\end{equation} 
Thus, if we use the approximation $\widehat{\vw}\gindex_{k+1} \defeq \widetilde{\vw}^{(r)}$ for the 
update \eqref{equ_pd_upd_x_exact}, we can ensure \eqref{equ_pd_upd_x_err} by iterating 
\eqref{equ_update_tilde_vw} for at least 
\begin{equation}
r \geq \log \big[ (1\!-\!R)\| \widetilde{\vw}^{(1)}\!-\!\widetilde{\vw}^{(0)} \| /k^2 \big] / \log R. 
\end{equation}     

Note that computing the iterates \eqref{equ_update_tilde_vw} requires the evaluation of 
the gradient $\nabla \Phi\gindex(\widetilde{\vw}^{(r)})$ of the log partition function $\Phi\gindex(\vw)$. 
According to \cite[Prop.\ 3.1.]{GraphModExpFamVarInfWainJor},
\vspace*{-2mm}
\begin{equation} 
\label{equ_nabla_expect}
\nabla \Phi\gindex(\vw) = {\rm E} \{ \vt(\vz\gindex) \} \mbox{ with } \vz\gindex \sim p(\vz;\vw). 
\vspace*{-2mm}
\end{equation} 
In general, the expectations \eqref{equ_nabla_expect} cannot be 
computed exactly in closed-form. A notable exception are exponential families $p(\vz;\vw)$ 
obtained from a probabilistic graphical model defined on a triangulated graph such as a tree. 
In this case it is possible to compute \eqref{equ_nabla_expect} in closed-form (see \cite[Sec. 2.5.2]{GraphModExpFamVarInfWainJor}). 
Another special case of \eqref{equ_def_p_i} for which \eqref{equ_nabla_expect} can be evaluated 
in closed-form is linear and logistic regression (see Sec.\ \ref{sec_some_examples}).

\vspace*{-1mm}
\subsection{Partially Observed Models} 
\vspace*{-1mm}

The learning Algorithm \ref{alg:PD} can be adapted easily to cope with partially observed exponential families \cite{GraphModExpFamVarInfWainJor}. 
In particular, for the networked LDA described in Sec.\ \ref{sec_some_examples}, we typically have access only to 
the word variables $z\gindex_{w,1},\ldots,z\gindex_{w,N}$ of some documents $i \in \samplingset \subseteq \nodes$. 
However, for (approximately) computing the update step \eqref{equ_pd_upd_x_exact} we would also need the 
values of the topic variables $z\gindex_{t,1},\ldots,z\gindex_{t,N}$ but those are not observed since they are latent (hidden) variables. 
In this case we can approximate \eqref{equ_pd_upd_x_exact} by some ``Expectation-Maximization'' (EM) principle (see \cite[Sec. 6.2]{GraphModExpFamVarInfWainJor}). 
An alternative to EM methods, based on the method of moments, for learning (latent variable) topic models 
has been studied in a recent line of work \cite{AroraTopModels2016}. 

\section{Numerical Experiments} 
\label{sec_numexp}

We report on the numerical results obtained by applying particular instances of Alg.\ \ref{alg:PD} 
to different datasets. The source code to reproduce these experiments can be found at \url{https://github.com/alexjungaalto/nLassoExpFamPDSimulations}. 

\vspace*{-1mm}
\subsection{Two-Cluster Dataset} 
\label{sec_two_cluster}
\vspace*{-1mm}

This experiment constructs an empirical graph $\graph$ by sparsely connecting two random graphs $\cluster_{1}$ 
and $\cluster_{2}$, each of size $\signalsize/2\!=\!40$ and with average degree $10$. 
The nodes of $\graph$ are assigned feature vectors $\vx^{(\nodeidx)} \in \mathbb{R}^{2}$ 
obtained by i.i.d.\ random vectors uniformly distributed on the unit sphere $\{ \vx \in \mathbb{R}^{2}: \| \vx \|=1\}$. 
The labels $y^{(\nodeidx)}$ of the nodes $i\in \nodes$ are generated according to the linear model \eqref{equ_lin_model} 
with zero noise $\varepsilon^{(\nodeidx)}=0$ and piecewise constant weight vectors $\mathbf{w}^{(\nodeidx)} = \mathbf{a}$ for $i \in \cluster_{1}$ and 
$\mathbf{w}^{(\nodeidx)} = \mathbf{b}$ for $i \in \cluster_{2}$ 
with some two (different) fixed vectors $\mathbf{a}, \mathbf{b} \in \mathbb{R}^{2}$. 
We assume that the labels $y^{(\nodeidx)}$ are known for the nodes in 
a small training set $\samplingset$ which includes three data points from each cluster, i.e., 
$|\samplingset \cap \cluster_{1}|= |\samplingset \cap \cluster_{2}|=3$. 

As shown in \cite{WhenIsNLASSO}, the validity of Asspt.\ \ref{equ_ineq_multcompcondition_condition}, depends on the connectivity of the 
cluster nodes with the boundary edges $\partial \defeq \{ \{i,j\} \in \edges: i \in \cluster_{1}, j \in \cluster_{2} \}$ which 
connect nodes in different clusters. 
In order to quantify the connectivity of the labeled nodes $\samplingset$ with the cluster boundary, we compute, 
for each cluster $\cluster_{l}$, the normalized flow value $\rho^{(l)}$ from one particular in each cluster $\cluster_{l}$ 
and the cluster boundary $\partial$. We normalize this flow by the boundary size $|\partial|$.

In Fig.\ \ref{fig_NMSEconnect}, we depict the normalized mean squared error (NMSE) $\varepsilon\!\defeq\!\| \overline{\vw}\!-\!\widehat{\vw} \|^{2}_{2} / \| \overline{\vw} \|^{2}_{2}$ 
incurred by Alg.\ \ref{alg:PD} (averaged over $10$ i.i.d.\ simulation runs) for varying connectivity, as 
measured by the empirical average $\bar{\rho}$ of $\rho^{(1)}$ and $\rho^{(2)}$ (having same distribution). 
According to Fig.\ \ref{fig_NMSEconnect} there are two regimes of levels of connectivity. For  
connectivity $\bar{\rho}\!>\!\sqrt{2}$, Alg.\ \ref{alg:PD} is able to learn piece-wise constant weights $\mathbf{w}^{(i)}$. 


\vspace*{0mm}
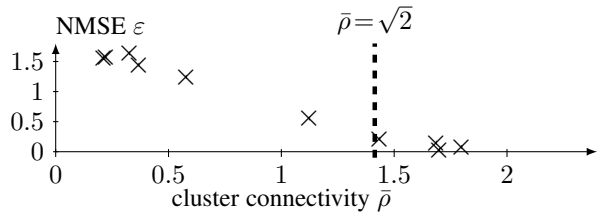
\begin{figure}[htbp]
\begin{center}
\begin{tikzpicture}
 \tikzset{x=3cm,y=0.8cm,every path/.style={>=latex},node style/.style={circle,draw}}
    \csvreader[ head to column names,%
                late after head=\xdef\aold{\a}\xdef\bold{\b},,%
                after line=\xdef\aold{\a}\xdef\bold{\b}]%
              {MSEoverBoundary_18-Sep-2019.csv}{}    
                {\draw [line width=0.5mm] (\a, \b) -- (\a,\b) node {\large $\times$};
    }
          \draw[->] (0,0) -- (2.4,0);
      \node [] at (1,-0.8) {\centering cluster connectivity $\bar{\rho}$};
      \draw[->] (0,0) -- (0,1.8);
      \node [anchor=south] at (0.2,1.8) {NMSE $\varepsilon$};
            \foreach \label/\labelval in {0/$0$,0.5/$0.5$,1/$1$,1.5/$1.5$}
        { 
          \draw (1pt,\label) -- (-1pt,\label) node[left] {\labelval};
        }
        \foreach \label/\labelval in {0/$0$,0.5/$0.5$,1/$1$,1.5/$1.5$,2/$2$}
        { 
          \draw (\label,1pt) -- (\label,-2pt) node[below] {\labelval};
        }
        
        \draw[dashed, line width=.6mm] (1.414,-0.1) -- (1.414,1.8) ;
        \node[anchor=south] at (1.414,1.8) {$\bar{\rho}\!=\!\sqrt{2}$} ; 
\end{tikzpicture}
\end{center}
\vspace*{-6mm}
  \caption{nLasso error for networked linear regression. } 
  \label{fig_NMSEconnect}
\end{figure}

To compare the effect of using TV \eqref{equ_def_TV_norm} in Alg.\ \ref{alg:PD} instead of the 
graph Laplacian quadratic form (see \cite{LevinaNetworkPred}) as network regularizer, 
a networked signal in noise model $y\gindex = w\gindex + \varepsilon\gindex$ is considered. 
The noise $\varepsilon\gindex$ is i.i.d.\ with zero mean and known variance $\sigma^{2}$. The signal weights are 
piece-wise constant with $\bar{w}\gindex = 1$ for $i\in \cluster_{1}$ and $\bar{w}\gindex = -1$ for $i \in \cluster_{2}$. 
Labels $y\gindex$ are observed only for nodes $\samplingset = \{1,2,3,\signalsize-2,\signalsize-1,\signalsize\}$. Alg.\ 
\ref{alg:PD} is used to learn weights $\hat{w}\gindex$ using a fixed number of $1000$ iterations and $\lambda =10$. 
The RNC estimator reduces to to one matrix inversion (see \cite[Eq. 2.4]{LevinaNetworkPred}) and is computed for 
the choices $\lambda \in \{1/100,1,100\}$ of the RNC regularization parameter. The resulting estimates $\hat{w}\gindex$ 
are shown in Fig.\ \ref{fig_nLassoRNC}. 

\vspace*{0mm}
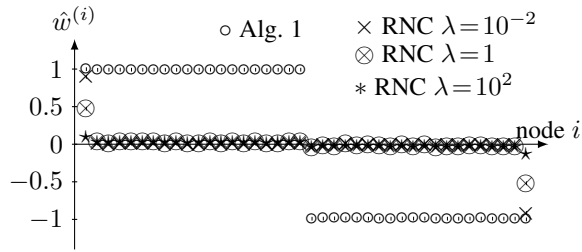
\begin{figure}[htbp]
\begin{center}
\begin{tikzpicture}
 \tikzset{x=3cm,y=1cm,every path/.style={>=latex},node style/.style={circle,draw}}
    \csvreader[ head to column names,%
                 late after head=\xdef\aold{\a}\xdef\bold{\b}\xdef\cold{\c}\xdef\dold{\d}\xdef\eold{\e},%
                after line=\xdef\aold{\a}\xdef\bold{\b}\xdef\cold{\c}\xdef\dold{\d}\xdef\dole{\e}
                
                ]%
              {nLassovsRNC_25-Sep-2019.csv}{}    
                {\draw [line width=0.5mm] (\a/20, \b) -- (\a/20,\b) node { $\circ$};
                \draw [line width=0.5mm] (\a/20, \c) -- (\a/20,\c) node { $\times$};
                    \draw [line width=0.5mm] (\a/20, \d) -- (\a/20,\d) node { $\otimes$};
                     \draw [line width=0.5mm] (\a/20, \e) -- (\a/20,\e) node { $\star$}; 
    }
     \draw[->] (0,0) -- (2.1,0);
     \node [] at (2.1,0.2) {\centering node $i$};
     \draw[->] (0,-1.4) -- (0,1.4);
      \node [anchor=south] at (0,1.4) { $\hat{w}\gindex$};
            \foreach \label/\labelval in {-1/$-1$,-0.5/$-0.5$,0/$0$,0.5/$0.5$,1/$1$}
        { 
          \draw (1pt,\label) -- (-1pt,\label) node[left] {\labelval};
        }
        
        \node[anchor=west] at (0.6,1.5) {$\circ$ Alg.\ \ref{alg:PD}} ; 
           \node[anchor=west] at (1.2,1.6) {$\times$ RNC $\lambda\!=\!10^{-2}$} ; 
              \node[anchor=west] at (1.2,1.2) {$\otimes$ RNC $\lambda\!=\!1$} ; 
                     \node[anchor=west] at (1.2,0.8) {$\ast$ RNC $\lambda\!=\!10^2$} ; 
\end{tikzpicture}
\end{center}
\vspace*{-6mm}
  \caption{Weights learnt by Alg.\ \ref{alg:PD} and RNC \cite{LevinaNetworkPred}. } 
  \label{fig_nLassoRNC}
\end{figure}
According to Fig.\ \ref{fig_nLassoRNC}, is able to accurately learn the piece-wise constant 
weights $\bar{w}\gindex$ from only two labels $y\gindex$, for $i \in \{1,\signalsize\}$. In contrast, 
RNC fails to leverage the network structure in order to learn the weights from a small number of 
labels.

\subsection{Weather Data} 


In this experiment, we consider networked data obtained from the Finnish meteorological institute. 
The empirical graph $\graph$ of this data represents Finnish weather stations.  
which are initially connected by an edge to their $K=3$ nearest neighbors. 
The feature vector $\vx^{(i)}\!\in\!\mathbb{R}^{3}$ of node $i\!\in\!\nodes$ contains the local (daily mean) 
temperature for the preceding three days. The label $y^{(i)} \in \mathbb{R}$ is the current day-average temperature. 

Alg.\ \ref{alg:PD} is used to learn the weight vectors $\vw^{(i)}$ for a localized linear model \eqref{equ_lin_model}. 
For the sake of illustration we focus on the weather stations in the capital region around Helsinki. 
These stations are represented by nodes $\cluster\!=\!\{23,18,22,15,12,13,9,7,5\}$ and we assume that labels $y^{(i)}$ are available for all 
nodes outside $\cluster$ and for the nodes $i\!\in\! \{12,13,15\}\!\subseteq\!\cluster$. Thus, for more than half of the nodes in $\cluster$ we do 
not know the labels $y^{(i)}$ but predict them via $\hat{y} = \big(\widehat{\vw}^{(i)}\big)^{T} \vx^{(i)}$ with the weight vectors $\widehat{\vw}^{(i)}$ 
obtained from Alg.\ \ref{alg:PD} (using $\lambda\!=1/7$ and a fixed number of $10^{4}$ iterations). The normalized 
average squared prediction error is  $\approx 10^{-1}$ and only slightly larger than the prediction error incurred by fitting 
a single linear model to the cluster $\cluster$.

\vspace*{-1mm}
\subsection{Image Segmentation}
\vspace*{-1mm}

This experiment revolves around using Alg.\ \ref{alg:PD} for image segmentation \cite{Grady2006,Rother2004}. 
Nodes $i\!\in\!\nodes$ are image pixels at coordinates $\big(p^{(i)},q^{(i)}\big)\!\in\!\{1,\ldots,P \}\!\times\!\{1,\ldots,Q\}$ (see Fig.\ \ref{fig:awesome_image2}). 

Different nodes $i,j$ are connected by an edge $\{i,j\} \in \edges$ if $p^{(i)} -p^{(j)} = 1$ or $q^{(i)} -q^{(j)} = 1$. 
We assign all edges $\{i,j\} \in \edges$ the same weight $W_{i,j} =1$. Pixels $i \in \nodes$ are characterized by 
feature vectors $\vx\gindex$ obtained by normalizing (zero mean and unit variance) the red, green and blue 
components of each pixel. 

We then constructed a training set $\samplingset$ of labeled data points by combining a background set 
$\mathcal{B}\subseteq \nodes$ ($y\gindex=0$) and a foreground set $\mathcal{F} \subseteq \nodes$ ($y\gindex=1$). 
These sets are determined based on the normalized redness $r\gindex \defeq x^{(i)}_{1}/\max_{j \in \nodes}  x^{(j)}_{1}$, 
\begin{equation}
\mathcal{B}\!\defeq\!\{i\!\in\!\nodes\!:\!r\gindex\!<\!1/2 \} \mbox{, and } \mathcal{F}\!\defeq\!\{i\!\in\!\nodes\!:\!r\gindex\!>\!9/10 \}. 
\end{equation}

We apply Alg.\ \ref{alg:PD}, with $\lambda\!=\!100$ and fixed number of $10$ iterations, 
to learn the weights $\vw\gindex$ for a networked logistic regression model (see Sec.\ \ref{sec_netLogRg}). 
For the update \eqref{equ_pd_upd_x_exact} in Alg.\ \ref{alg:PD} we used a single Newton step. 
The resulting predictions $\big(\widehat{\vw}\gindex\big)^{T} \vx\gindex$ are shown on the right of Fig.\ \ref{fig:awesome_image2}. 
The middle of Fig.\ \ref{fig:awesome_image2} depicts the hard segmentation obtained by the ``GrabCut'' 
method \cite{Rother2004}. Using {\rm MATLAB} version 19 on a standard laptop, Alg.\ \ref{alg:PD} is almost ten times faster than {\rm GrabCut}.

\begin{figure}[!htb]
\minipage{0.32\columnwidth}
  \includegraphics[width=3cm,height=2cm]{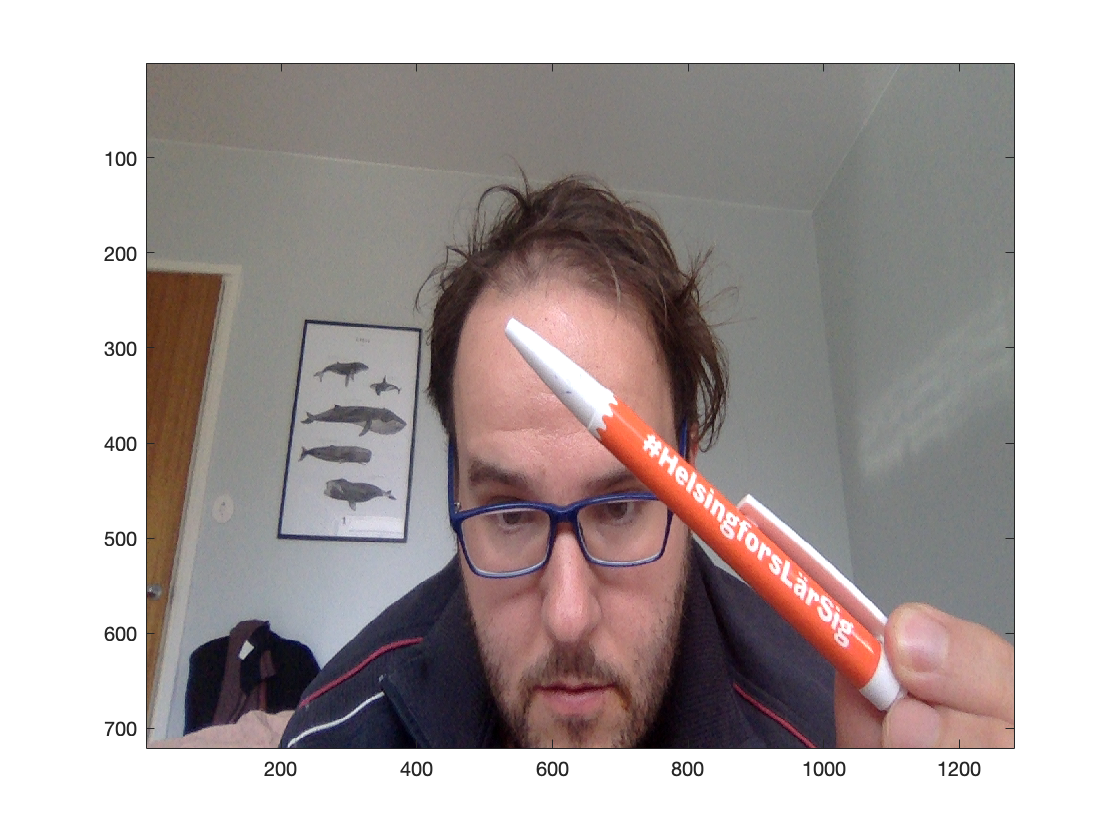}
\endminipage\hfill
\minipage{0.32\columnwidth}
  \includegraphics[width=3cm,height=2cm]{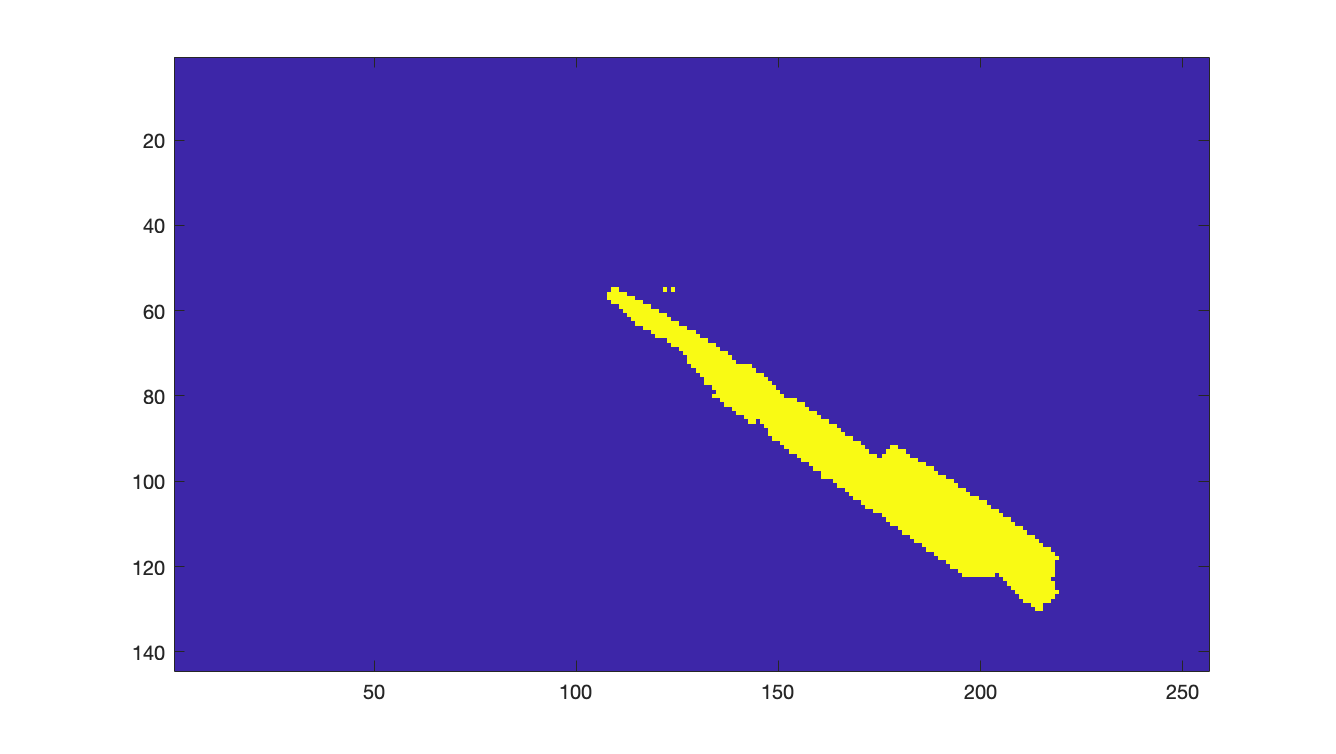}
\endminipage\hfill
\minipage{0.32\columnwidth}%
  \includegraphics[width=3cm,height=2cm]{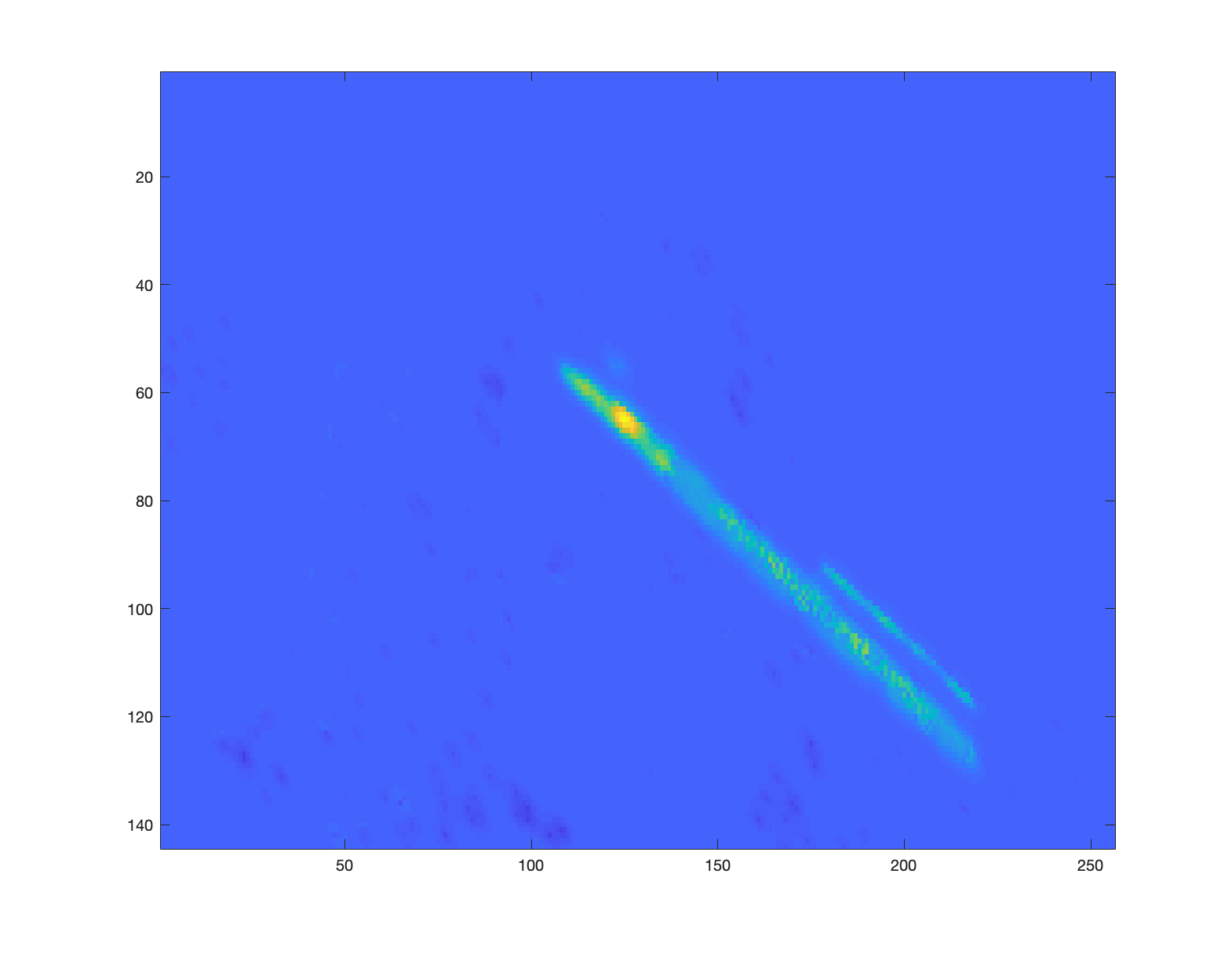}
\endminipage
  \caption{Left: Original image. Middle: Grabcut. Right: Alg.\ \ref{alg:PD}}\label{fig:awesome_image2}
\end{figure}

\vspace*{-1mm}
\section{Conclusion} 
\vspace*{-1mm}

We have introduced networked exponential families as a flexible statistical modeling paradigm for 
networked data. The error of nLasso applied to learning networked exponential families has been 
analyzed. An efficient implementation of nLasso has been proposed using a primal-dual method for 
convex optimization. Directions for future research include a more detailed analysis of the convergence 
of nLasso for typical network structures as well as data-driven learning of the network structure (graphical 
model selection). In particular, the analysis underlying Sec.\ \ref{equ_analysis_error} might guide the 
design of network structure by relating Asspt.\ \ref{def_NNSP} to network flow problems (see \cite{JungAISTATS2019}). 

\section*{Acknowledgments}
Roope Tervo from Finnish Meteorological Institute provided a Python script to download weather data.

\bibliographystyle{plain}
\bibliography{/Users/junga1/Literature.bib}

\section{Proofs} 
We first collect some helper results in Sec.\ \ref{sec_helper_results} that will 
be used in Sec.\ \ref{sec_main_proof} to obtain a detailed derivation of Theorem \ref{thm_main_result}. 
\subsection{Helper Results} 
\label{sec_helper_results}
\begin{lemma}
\label{lem_bound_noise_term}
For any two vector signals $\vu,\vv \in \mathcal{W}$ (see \eqref{equ_def_graph_sigs}) 
defined on an empirical graph $\graph$,  
\begin{align} 
\label{equ_lem_bound_noise_term_lemma}
\sum_{i \in \nodes} \big(\vu\gindex\big)^{T} \vv\gindex & \leq \nonumber \\
& \hspace*{-20mm} (1/|\nodes|) \big( \sum_{i \in \nodes} \vv\gindex \big)^{T}  \sum_{j \in \nodes} \vu^{(j)} + \big\| \big( \mD^{\dagger} \big)^{T} \vv\big\|_{2,\infty} \| \vu \|_{\rm TV}. 
\end{align}
Here, $\mD \in \mathbb{R}^{(\featurelen |\edges|) \times (\featurelen|\nodes|)}$ denotes the block-wise incidence matrix \eqref{equ_def_incident_matrix} of the empirical graph $\graph$. 
\end{lemma}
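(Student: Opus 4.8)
The plan is to decompose the signal $\vu$ into its graph-average part and an ``orthogonal'' part that is visible to the incidence operator, and then to control the resulting cross term by the total variation. Concretely, since the empirical graph $\graph$ is connected, the null space of the block incidence matrix $\mD$ in \eqref{equ_def_incident_matrix} is exactly the set of constant vector-valued signals, and $\mD^{\dagger}\mD$ is the orthogonal projection onto the orthogonal complement of that null space. I would therefore write $\vu = \vu_{0} + \mD^{\dagger}\mD\vu$ with $\vu_{0} \defeq (\mathbf{I} - \mD^{\dagger}\mD)\vu$, and observe that $\vu_{0}$ is the constant signal whose value at every node equals the average $\bar{\vu} \defeq (1/|\nodes|)\sum_{j\in\nodes}\vu^{(j)}$.

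First I would split the inner product as
\begin{equation}
\sum_{i\in\nodes}\big(\vu\gindex\big)^{T}\vv\gindex \;=\; \bar{\vu}^{T}\sum_{i\in\nodes}\vv\gindex \;+\; \big(\mD^{\dagger}\mD\vu\big)^{T}\vv,
\end{equation}
whose first summand is exactly $(1/|\nodes|)\big(\sum_{i\in\nodes}\vv\gindex\big)^{T}\sum_{j\in\nodes}\vu^{(j)}$, i.e.\ the first term on the right-hand side of \eqref{equ_lem_bound_noise_term_lemma}. For the second summand I would push the transpose through the product, $\big(\mD^{\dagger}\mD\vu\big)^{T}\vv = \big(\mD\vu\big)^{T}\big(\mD^{\dagger}\big)^{T}\vv$, and then read this as a sum over edges $e\in\edges$ of the block inner products $\big((\mD\vu)^{(e)}\big)^{T}\big((\mD^{\dagger})^{T}\vv\big)^{(e)}$. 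Applying Cauchy--Schwarz in each block and factoring out the largest block norm of $(\mD^{\dagger})^{T}\vv$ gives
\begin{equation}
\big(\mD\vu\big)^{T}\big(\mD^{\dagger}\big)^{T}\vv \;\leq\; \big\|\big(\mD^{\dagger}\big)^{T}\vv\big\|_{2,\infty}\sum_{e\in\edges}\big\|(\mD\vu)^{(e)}\big\| \;=\; \big\|\big(\mD^{\dagger}\big)^{T}\vv\big\|_{2,\infty}\,\|\mD\vu\|_{2,1},
\end{equation}
and the identity \eqref{equ_TV_is_2_1_norm} turns $\|\mD\vu\|_{2,1}$ into $\|\vu\|_{\rm TV}$. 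Combining the two parts yields the claimed bound.

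The individual steps are elementary, so there is no single deep obstacle; the points that require care are (i) justifying that the null-space projection of $\vu$ equals the \emph{global} node average, which uses connectivity of $\graph$ (without it one only gets per-component averages and the stated form of the first term would need adjustment), and (ii) the bookkeeping of the pseudoinverse, namely the elementary identity $(\mD^{\dagger}\mD)^{T} = \mD^{T}(\mD^{\dagger})^{T}$ together with $\mD\vu_{0} = \mathbf{0}$, which is precisely what lets the constant part drop out of the edge sum. An alternative route that avoids any explicit mention of projections is to substitute the centered signal $\vu'\gindex \defeq \vu\gindex - \bar{\vu}$ directly: it satisfies $\sum_{i\in\nodes}\vu'\gindex = \mathbf{0}$, hence lies in the row space of $\mD$ so that $\vu' = \mD^{\dagger}\mD\vu'$ and $\mD\vu' = \mD\vu$, and the same two terms appear.
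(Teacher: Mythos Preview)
Your proposal is correct and follows essentially the same route as the paper: decompose $\vu$ via the projection $\mD^{\dagger}\mD$ onto the orthogonal complement of the constant signals (the paper writes this as $\mathbf{I}-\mathbf{P}$ with $\mathbf{P}$ the projection onto $\Ker(\mL)$, which coincides with $\Ker(\mD)$), identify the constant part with the node average, and bound the cross term $\vv^{T}\mD^{\dagger}\mD\vu=(\mD\vu)^{T}(\mD^{\dagger})^{T}\vv$ by $\|(\mD^{\dagger})^{T}\vv\|_{2,\infty}\|\mD\vu\|_{2,1}$ before invoking \eqref{equ_TV_is_2_1_norm}. Your explicit remark that connectivity of $\graph$ is what makes the null-space projection equal the global average is exactly the point the paper uses when it describes $\Ker(\mL)$.
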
 
{\bf Proof :}
Any graph signal $\vu$ can be decomposed as 
\begin{equation} 
\label{equ_decomp_proj_orth_proj}
\vu= \mathbf{P} \vu+ (\mathbf{I}- \mathbf{P}) \vu, 
\end{equation} 
with $\mathbf{P}$ denoting the orthogonal projection matrix on the nullspace of the block-wise 
graph Laplacian matrix $\mL$ \eqref{equ_def_graph_Laplacian_matrix}. 

For a connected graph, the nullspace $\mathcal{K}(\mL)$ is spanned by 
$\featuredim$ graph signals  (see \cite{Luxburg2007})
\begin{equation} 
\mathbf{v}^{(j)} = \mathbf{1} \otimes \mathbf{e}^{(j)} \in \mathcal{W} \mbox{, for } j \in \{1,\ldots,\featurelen \}.
\end{equation} 
Here, we used the constant graph signal $\mathbf{1} \in \mathbb{R}^{\nodes}$ assigning all nodes 
the same signal value $1$. The projection matrix associated with the nullspace $\mathcal{K}(\mL)$ is 
\begin{equation}
\label{equ_def_projec_connected_gaph}
\mathbf{P} = \underbrace{ (1/(\mathbf{1}^{T} \mathbf{1}))}_{= 1/|\nodes|} \sum_{j=1}^{\featurelen} \mathbf{1} \big(\mathbf{1}\big)^{T} \otimes \mathbf{M}^{(j)}. 
\end{equation} 
Here, $\mathbf{M}^{(j)} \defeq \mathbf{e}^{(j)} \big( \mathbf{e}^{(j)} \big)^{T}$. 
Therefore, 
\begin{equation} 
\label{equ_application_projection_nullspace}
\mathbf{P} \vu  \stackrel{\eqref{equ_def_projec_connected_gaph}}{=}   (1/|\nodes|)  \sum_{j=1}^{\featurelen} \sum_{i \in \nodes} u^{(i)}_{j} \mathbf{1} \otimes \mathbf{e}^{(j)}.
\end{equation}
	
The projection matrix on the orthogonal complement of  $\mathcal{K}(\mL) \subseteq \graphsigs$ is $\mathbf{I} - \mathbf{P}$. 
Then (see \cite{pmlr-v49-huetter16}), 
\begin{equation}
\label{equ_applicatio_ortho_projection}
\mathbf{I}-\mathbf{P} = \mD^{\dagger} \mD.
\end{equation}
with the block-wise incidence matrix  $\mD$ \eqref{equ_def_incident_matrix}. 
Combining \eqref{equ_application_projection_nullspace} and \eqref{equ_applicatio_ortho_projection} with \eqref{equ_decomp_proj_orth_proj}, 
\begin{align}
\label{equ_proof_basic_decomp_error}
\hspace*{-3mm}  \sum_{i \in \nodes} \big(\vu^{(i)}\big)^{T} \vv^{(i)} \!=\! 
(1/|\nodes|)\sum_{i,i' \in \nodes} \big(\vu^{(i)}\big)^{T} \vv^{(i')}\!+\!\vv^{T} \mD^{\dagger} \mD \vu.
\end{align}
Combining \eqref{equ_proof_basic_decomp_error} with the inequality  $\va^{T} \vb \leq \| \va \|_{2} \| \vb \|_{2}$,  
\begin{align}
\label{equ_proof_basic_decomp_error_12}
\sum_{i \in \nodes} \big(\vu^{(i)}\big)^{T} \vv^{(i)} & \leq \nonumber \\
& \hspace*{-20mm}  (1/|\nodes|) \sum_{i,j\in \nodes}  \big(\vu^{(i)}\big)^{T}\vv^{(j)}\!+\!\big\| \big( \mD^{\dagger} \big)^{T} \vv \big\|_{2,\infty} \| \mD \vu \|_{2,1}.
\end{align} 
The result \eqref{equ_lem_bound_noise_term_lemma} follows from \eqref{equ_proof_basic_decomp_error_12} by using \eqref{equ_TV_is_2_1_norm}. 
 \hfill$\square$

Applying Lem.\ \ref{lem_bound_noise_term} to the subgraphs induced by a 
partition $\partition\!=\!\{\cluster_{1},\ldots,\cluster_{|\partition|} \}$, yields the following result. 
\begin{corollary}
\label{cor_bound_noise_term_spectralgap}
Consider an empirical graph $\graph\!=\!(\nodes,\edges,\weightmtx)$ and partition $\partition\!=\!\{\cluster_{1},\ldots,\cluster_{|\partition|}\}$. 
Let $\cluster_{l}$ also denote the induced subgraph of a cluster and assume they are connected. 
For any two graph signals $\vu, \vv \in \graphsigs$,
\begin{align} 
\label{equ_lem_bound_noise_term}
\sum_{i \in \samplingset} \big(\vv\gindex\big)^{T} \vu\gindex & \!\leq\!  \max_{l=1,\ldots,|\partition|} (1/|\cluster_{l}|) \big\| \sum_{i \in \cluster_{l} } \vv\gindex \big\|_{2} \hspace*{-1mm}\sum_{j \in \samplingset} \| \vu^{(j)} \|_{2} \nonumber \\ 
	& \hspace*{-10mm}+ \max_{l=1,\ldots,|\partition|} \big\| \big( \mD_{\cluster_{l}}^{\dagger} \big)^{T} \vv_{\cluster_{j}} \big\|_{2,\infty} \| \vu \|_{\rm TV}. 
\end{align}
Here, $\mD_{\cluster_{l}}$ denotes the block-wise incidence 
matrix of the induced subgraph $\cluster_{l}$ (see \eqref{equ_def_incident_matrix}). 
\end{corollary}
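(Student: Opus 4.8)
The plan is to deduce Cor.~\ref{cor_bound_noise_term_spectralgap} from Lem.~\ref{lem_bound_noise_term} by decomposing the sum over $\samplingset$ along the partition $\partition$ and invoking the lemma separately on each cluster. Since $\partition=\{\cluster_{1},\ldots,\cluster_{|\partition|}\}$ is a partition of $\nodes$, the sets $\samplingset\cap\cluster_{l}$ are pairwise disjoint with union $\samplingset$, so
\[
\sum_{i \in \samplingset} \big(\vv\gindex\big)^{T} \vu\gindex \;=\; \sum_{l=1}^{|\partition|}\ \sum_{i \in \samplingset\cap\cluster_{l}} \big(\vv\gindex\big)^{T} \vu\gindex .
\]
For each cluster $\cluster_{l}$, which by hypothesis induces a connected subgraph, I would apply Lem.~\ref{lem_bound_noise_term} to that subgraph with the signals $\vu$ and $\vv$ replaced by their restrictions to $\cluster_{l}$ (using that in the relevant setting the noise/score signal $\vv$ vanishes off $\samplingset$, so that the left-hand side of the lemma equals $\sum_{i\in\samplingset\cap\cluster_{l}}(\vv\gindex)^{T}\vu\gindex$). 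This yields, for every $l$, a bound in terms of the nullspace contraction $\big(\sum_{i\in\cluster_{l}}\vv\gindex\big)^{T}\sum_{j\in\cluster_{l}}\vu^{(j)}$, of the quantity $\|(\mD_{\cluster_{l}}^{\dagger})^{T}\vv_{\cluster_{l}}\|_{2,\infty}$, and of the cluster-wise total variation $\|\vu_{\cluster_{l}}\|_{\rm TV}$ taken over the edges internal to $\cluster_{l}$.

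It then remains to aggregate these per-cluster bounds. For the first (nullspace) term I would apply Cauchy--Schwarz $\va^{T}\vb\le\|\va\|_{2}\|\vb\|_{2}$ followed by the triangle inequality $\|\sum_{j}\vu^{(j)}\|_{2}\le\sum_{j}\|\vu^{(j)}\|_{2}$, turning it into $(1/|\cluster_{l}|)\big\|\sum_{i\in\cluster_{l}}\vv\gindex\big\|_{2}\sum_{j\in\cluster_{l}}\|\vu^{(j)}\|_{2}$; summing over $l$ and bounding the coefficient $(1/|\cluster_{l}|)\|\sum_{i\in\cluster_{l}}\vv\gindex\|_{2}$ by its maximum over clusters reproduces the first term of \eqref{equ_lem_bound_noise_term}. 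For the second term I would bound each factor $\|(\mD_{\cluster_{l}}^{\dagger})^{T}\vv_{\cluster_{l}}\|_{2,\infty}$ by its maximum over $l$, and then use that the intra-cluster edge sets of distinct clusters are disjoint and all contained in $\edges$, so that $\sum_{l}\|\vu_{\cluster_{l}}\|_{\rm TV}\le\|\vu\|_{\rm TV}$; this gives the second term of \eqref{equ_lem_bound_noise_term}. Adding the two contributions completes the argument.

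The part I expect to be the main (if somewhat routine) obstacle is the book-keeping with index sets and supports: Lem.~\ref{lem_bound_noise_term} naturally produces sums running over \emph{all} nodes of a cluster and a TV over the edges of whatever graph it is applied to, whereas Cor.~\ref{cor_bound_noise_term_spectralgap} records the training-set-restricted sum $\sum_{j\in\samplingset}$ and the TV of $\vu$ over the full empirical graph $\graph$. Reconciling these requires applying the lemma cluster-by-cluster to suitably restricted signals, exploiting that the relevant noise signal is supported on $\samplingset$, and observing that replacing per-cluster total variations by the global one (and per-cluster coefficients by their maxima over $l$) only weakens the inequality, so nothing sharp is lost in the reduction.
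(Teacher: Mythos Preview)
Your proposal is correct and follows precisely the approach indicated in the paper, which simply states that the corollary is obtained by ``applying Lem.~\ref{lem_bound_noise_term} to the subgraphs induced by a partition $\partition$.'' In fact you have supplied considerably more detail than the paper itself: the cluster-wise decomposition of the sum, the per-cluster invocation of the lemma, the use of Cauchy--Schwarz plus the triangle inequality on the nullspace term, the replacement of per-cluster coefficients by their maxima, and the observation that intra-cluster TVs sum to at most the global TV are exactly the ingredients needed, and your remark about the $\samplingset$-versus-$\nodes$ bookkeeping (resolved via the support of $\vv$) correctly anticipates the only subtlety.
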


The proof of Theorem \ref{thm_main_result} (see Section \ref{sec_main_proof}) 
will require a large deviation bound for weighted sums of independent random vectors $\vz\gindex$ 
distributed according to \eqref{equ_def_p_i}. 
\begin{lemma}
\label{lem_concentration_single} 
Consider $\samplesize$ independent random vectors $\vz\gindex$, for $i \in \samplingset$, distributed according to \eqref{equ_def_p_i}. 
For fixed unit-norm vectors $\|\mathbf{m}\gindex\|=1$, denote $y\gindex \defeq \big(\vm\gindex\big)^{T} \vt\gindex(\vz\gindex)$ 
and $\mu\gindex \defeq \expect \big\{ y\gindex \big\}$. 
If $\nabla^{2} \Phi\gindex\!\preceq\!U \mathbf{I}$ for all $i\!\in\!\samplingset$, then 
\begin{align} 
\label{equ_concentration_exp_family_bound_FIM_112}
\prob \big\{ \big| (1/\samplesize) \sum_{i \in \samplingset} \big(y\gindex\!-\!\mu\gindex\big) \big|\geq\!\eta \big\} & \!\leq\!  2 \exp\big(-\samplesize\eta^2/(2\FIMeignmax) \big).
\end{align}
\end{lemma}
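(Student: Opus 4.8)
The plan is to show that each centered random variable $y\gindex - \mu\gindex$ is sub-Gaussian with variance proxy $\FIMeignmax$, and then conclude by a routine Chernoff bound together with independence.

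The first step is to control the moment generating function of the sufficient statistic. Using the definition \eqref{equ_def_p_i} together with \eqref{equ_def_cummulant_function}, a direct computation (expanding the normalizing constant) gives, for every $\vs \in \mathbb{R}^{\featurelen}$ and every $i \in \samplingset$,
\[
\expect\big\{ \exp\big(\vs^{T} \vt\gindex \big)\big\} = \exp\big( \Phi\gindex(\trueweights\gindex + \vs) - \Phi\gindex(\trueweights\gindex) \big),
\]
so that $\Phi\gindex$ is, up to an additive shift, the cumulant generating function of $\vt\gindex$. The hypothesis $\nabla^{2}\Phi\gindex \preceq \FIMeignmax \mathbf{I}$ (understood to hold for all values of the argument) forces $\Phi\gindex$ to be finite and twice continuously differentiable on all of $\mathbb{R}^{\featurelen}$, so a second-order Taylor expansion with remainder applied at $\vs = \theta \vm\gindex$ yields
\[
\Phi\gindex(\trueweights\gindex + \theta\vm\gindex) - \Phi\gindex(\trueweights\gindex) \leq \theta\, \big(\vm\gindex\big)^{T} \nabla\Phi\gindex(\trueweights\gindex) + \tfrac{\FIMeignmax}{2}\,\theta^{2}\,\|\vm\gindex\|^{2}.
\]
Since $\|\vm\gindex\| = 1$ and, by \eqref{equ_nabla_expect}, $\nabla\Phi\gindex(\trueweights\gindex) = \expect\{\vt\gindex\}$ so that $\big(\vm\gindex\big)^{T}\nabla\Phi\gindex(\trueweights\gindex) = \mu\gindex$, this gives the sub-Gaussian estimate $\expect\{\exp(\theta(y\gindex - \mu\gindex))\} \leq \exp(\FIMeignmax\theta^{2}/2)$ valid for all $\theta \in \mathbb{R}$.

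The second step is the standard tail argument. By independence of the $\vz\gindex$ across $i \in \samplingset$,
\[
\expect\Big\{ \exp\Big( \tfrac{\theta}{\samplesize}\textstyle\sum_{i\in\samplingset}(y\gindex-\mu\gindex)\Big)\Big\} = \prod_{i\in\samplingset}\expect\big\{\exp\big(\tfrac{\theta}{\samplesize}(y\gindex-\mu\gindex)\big)\big\} \leq \exp\big(\FIMeignmax\theta^{2}/(2\samplesize)\big).
\]
Markov's inequality applied to $\exp(\theta\,\cdot\,)$ then yields $\prob\{\tfrac1\samplesize\sum_{i\in\samplingset}(y\gindex-\mu\gindex)\geq\eta\}\leq \exp(-\theta\eta + \FIMeignmax\theta^{2}/(2\samplesize))$ for every $\theta>0$; optimizing over $\theta$ (the minimizer being $\theta = \samplesize\eta/\FIMeignmax$) gives the bound $\exp(-\samplesize\eta^{2}/(2\FIMeignmax))$. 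Repeating the argument with $-\theta$ controls the lower tail, and a union bound over the two one-sided events produces the factor $2$ in \eqref{equ_concentration_exp_family_bound_FIM_112}. The only delicate point is the moment generating function step: one must ensure the cumulant function is finite (indeed everywhere) so that the Taylor expansion and the Hessian bound are legitimate, which is precisely what the assumed upper bound on $\nabla^{2}\Phi\gindex$ supplies; the remainder is the textbook sub-Gaussian Chernoff bound.
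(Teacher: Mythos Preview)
Your proof is correct and follows essentially the same route as the paper: both compute the moment generating function of $y\gindex$ via the cumulant identity $\expect\{\exp(\theta y\gindex)\}=\exp(\Phi\gindex(\trueweights\gindex+\theta\vm\gindex)-\Phi\gindex(\trueweights\gindex))$, apply a second-order Taylor expansion together with the Hessian bound $\nabla^{2}\Phi\gindex\preceq \FIMeignmax\mathbf{I}$ to obtain sub-Gaussianity, and finish with the Chernoff bound plus a union bound for the two tails. The only cosmetic difference is that you first establish sub-Gaussianity of each summand and then aggregate, whereas the paper works directly with the full sum before optimizing over $\theta$.
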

\begin{proof}
Set 
\begin{equation}
\label{equ_def_sum_y_i_mu_i} 
y \defeq \sum_{i \in \samplingset} y\gindex \mbox{, and }\mu \defeq \sum_{i \in \samplingset} \mu\gindex.
\end{equation}  
By Markov's inequality, for any $\theta\!>\!0$,
\begin{align}
\label{equ_bound_upper_112}
\prob \big\{  (1/\samplesize) \sum_{i \in \samplingset} \big(y\gindex\!-\!\mu\gindex\big) \geq\!\eta \big\}  &= 
\prob \{ y - \mu \geq \samplesize \eta \} \nonumber \\[3mm]
 &\hspace*{-33mm} = \prob \{ \exp( \theta y )  \geq \exp (\theta(\samplesize\eta+\mu)) \}\nonumber \\[3mm]
&\hspace*{-33mm} \leq \exp( - \theta (\samplesize\eta + \mu ))  \expect \{ \exp ( \theta y) \} \nonumber \\[3mm]
&\hspace*{-33mm} = \exp( - \theta (\samplesize\eta + \mu ))  \prod_{i \in \samplingset} \expect \{ \exp ( \theta y\gindex) \}. 
\end{align}
The last equality in \eqref{equ_bound_upper_112} is due to the 
independence of the random variables $y\gindex$. 

Combining \eqref{equ_bound_upper_112} with  
\begin{align} 
\label{equ_moment_gen_112}
\hspace*{-2mm} \expect \{ \exp ( \theta y\gindex) \} & \stackrel{\eqref{equ_def_cummulant_function}}{=}\exp(\Phi\gindex(\trueweights\gindex\!+\!\theta\vm\gindex)\!-\!\Phi\gindex(\trueweights\gindex)) 
\end{align} 
yields 
\begin{align} 
\label{equ_upper_bound_112}
\prob \{ y\!-\!\mu\!\geq\!\eta \}& \leq  \\ 
& \hspace*{-20mm} \exp( - \theta (\samplesize \eta\!+\!\mu)\!+\!\sum_{i \in \samplingset} \Phi\gindex(\trueweights\gindex\!+\!\theta\vm\gindex)\!-\!\Phi\gindex(\trueweights\gindex)).  \nonumber
\end{align} 
Similarly, 
\begin{align} 
\label{equ_lower_bound_112}
\prob \{ y\!-\!\mu\!\leq\!- \eta \} & \!\leq\!   \\ 
& \hspace*{-20mm} \exp( - \theta (\samplesize \eta\!+\!\mu)\!+\!\sum_{i \in \samplingset}\Phi\gindex(\trueweights\gindex\!+\!\theta\vm\gindex)\!-\!\Phi\gindex(\trueweights\gindex)). \nonumber
\end{align} 
A union bound allows to sum up \eqref{equ_upper_bound_112} and \eqref{equ_lower_bound_112} to obtain 
\begin{align}
\label{equ_lower_bound_112}
\prob \{ |y\!-\!\mu|\!\geq\! \eta \} & \!\leq\!   \\ 
& \hspace*{-21mm} 2 \exp( - \theta (\samplesize \eta\!+\!\mu)\!+\!\sum_{i \in \samplingset}\hspace*{-1mm}\Phi\gindex(\trueweights\gindex\!+\!\theta\vm\gindex)\!-\!\Phi\gindex(\trueweights\gindex)). \nonumber
\end{align} 

Using Taylor's theorem and $\nabla \Phi\gindex(\trueweights\gindex) = \expect \{ \vt\gindex\}$ \cite{GraphModExpFamVarInfWainJor}, 
\begin{align} 
\Phi\gindex(\trueweights\gindex\!+\!\theta\vm\gindex)\!-\!\Phi\gindex(\trueweights\gindex) & =  \theta \mu\gindex +  \label{equ_bound_diff_cum_function_112} \\ 
& \hspace*{-40mm} (1/2) \big(\theta\gindex\big)^2 \big(\vm\gindex\big)^{T} \nabla^{2} \Phi\gindex(\trueweights\gindex\!+\!\theta\gindex \vm\gindex)\vm\gindex\nonumber
\end{align} 
 with some $\theta\gindex \!\in\![0,\theta]$. 
Inserting $\nabla^{2} \Phi\gindex \preceq \FIMeignmax \mathbf{I}$ into \eqref{equ_bound_diff_cum_function_112}, 
\begin{align} 
\Phi\gindex(\trueweights\gindex\!+\!\theta\vm\gindex)\!-\!\Phi\gindex(\trueweights\gindex) & \geq  \theta \mu\gindex + \theta^2 \FIMeignmax /2, \nonumber
\end{align} 
and, in turn via \eqref{equ_lower_bound_112}, 
\begin{align} 
\label{equ_lower_bound_1112}
\prob \{ |y\!-\!\mu|\!\geq\! \eta \} & \!\leq\!  \exp( - \theta \samplesize \eta\!+\! \samplesize \theta^{2}\FIMeignmax/2). 
\end{align} 
Optimizing \eqref{equ_lower_bound_1112} by choosing $\theta$ suitably yields \eqref{equ_concentration_exp_family_bound_FIM_112}. 
\end{proof}
Applying Lemma \ref{lem_concentration_single} using $\mathbf{m} = \mathbf{e}^{(l)}$ 
and using $\| \vx \|_{2} \leq \sqrt{\featuredim} \| \vx \|_{\infty}$, for any $\vx \in \mathbb{R}^{\featuredim}$ 
yields the following result. 
\begin{corollary}
\label{cor_concentration_single} 
Consider $\samplesize$ independent random vectors $\vz\gindex$, for $i \in \samplingset$, distributed according to \eqref{equ_def_p_i}. 
If $\nabla^{2} \Phi\gindex\!\preceq\!U \mathbf{I}$ for all $i\!\in\!\samplingset$, then 
\begin{align} 
\label{equ_concentration_exp_family_bound_FIM_1123}
\prob \big\{ \big\| (1/\samplesize) \sum_{i \in \samplingset} \big(\vz\gindex\!-\!\ \expect\{ \vz\gindex \} \big) \big\|\geq\!\eta \big\} & \!\leq\!  \nonumber \\
& \hspace*{-30mm} 2 \exp\big(-\samplesize\eta^2/(2\featuredim\FIMeignmax) \big).
\end{align}
\end{corollary}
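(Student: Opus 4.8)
The plan is to reduce the bound on the Euclidean norm of the centered, averaged sum to a collection of scalar deviation bounds — one per coordinate — each of which is supplied directly by Lemma~\ref{lem_concentration_single}. Concretely, write $\vs \defeq (1/\samplesize) \sum_{i \in \samplingset} \big( \vz\gindex - \expect\{\vz\gindex\} \big) \in \mathbb{R}^{\featuredim}$ and use the elementary inequality $\| \vx \| \leq \sqrt{\featuredim}\, \| \vx \|_{\infty}$ to note that the event $\{ \| \vs \| \geq \eta \}$ is contained in $\{ \| \vs \|_{\infty} \geq \eta/\sqrt{\featuredim} \} = \bigcup_{l=1}^{\featuredim} \{ |s_{l}| \geq \eta/\sqrt{\featuredim} \}$, where $s_{l}$ denotes the $l$-th component of $\vs$.

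Next I would control each scalar deviation $\{ |s_{l}| \geq \eta/\sqrt{\featuredim} \}$ separately. Applying Lemma~\ref{lem_concentration_single} with the fixed unit-norm vector $\vm\gindex = \ve^{(l)}$, the scalar $y\gindex = (\vm\gindex)^{T} \vt\gindex(\vz\gindex)$ is exactly the $l$-th component of the sufficient statistic (which is what the symbol $\vz\gindex$ abbreviates in the corollary) and $\mu\gindex = \expect\{y\gindex\}$ is its mean; moreover the hypothesis $\nabla^{2} \Phi\gindex \preceq \FIMeignmax \mathbf{I}$ assumed in the corollary is precisely what the lemma requires. This gives $\prob\{ |s_{l}| \geq \eta/\sqrt{\featuredim} \} \leq 2 \exp\big( -\samplesize (\eta/\sqrt{\featuredim})^{2} / (2\FIMeignmax) \big) = 2 \exp\big( -\samplesize \eta^{2} / (2 \featuredim \FIMeignmax) \big)$ for every $l \in \{1,\ldots,\featuredim\}$.

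Finally, a union bound over the $\featuredim$ coordinates assembles these estimates into $\prob\{ \| \vs \| \geq \eta \} \leq \sum_{l=1}^{\featuredim} \prob\{ |s_{l}| \geq \eta/\sqrt{\featuredim} \} \leq 2\featuredim \exp\big( -\samplesize \eta^{2} / (2 \featuredim \FIMeignmax) \big)$, which is the asserted bound (the polynomial prefactor $\featuredim$ may be kept explicit or absorbed into constants when the corollary is invoked in the proof of Theorem~\ref{thm_main_result}). There is essentially no genuine obstacle in this argument — it is a one-line norm inequality, a union bound, and a direct citation of Lemma~\ref{lem_concentration_single}; the only points needing a little care are (i) the notational identification of $\vz\gindex$ with the sufficient statistic $\vt\gindex(\vz\gindex)$ appearing in the lemma, so that the choice $\vm\gindex = \ve^{(l)}$ genuinely extracts coordinate $l$, and (ii) tracking the dimension factor $\sqrt{\featuredim}$ so that the exponent ends up with $2\featuredim\FIMeignmax$, rather than $2\FIMeignmax$, in the denominator.
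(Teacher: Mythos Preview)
Your proposal is correct and follows essentially the same approach as the paper: the paper's entire justification reads ``Applying Lemma~\ref{lem_concentration_single} using $\mathbf{m} = \mathbf{e}^{(l)}$ and using $\| \vx \|_{2} \leq \sqrt{\featuredim} \| \vx \|_{\infty}$,'' which is precisely your coordinate-wise reduction via the $\ell_2$--$\ell_\infty$ inequality together with a union bound over the $\featuredim$ coordinates. You also correctly flag that this route produces a prefactor $2\featuredim$ rather than the $2$ stated in \eqref{equ_concentration_exp_family_bound_FIM_1123}; the paper's statement appears to have silently dropped this polynomial factor.
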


\subsection{Proof of Theorem \ref{thm_main_result}}
\label{sec_main_proof}

Any solution $\widehat{\gsignal}$ of the nLasso problem \eqref{optProb} satisfies 
\begin{align}
\label{equ_basic_def_lasso_proof}
 & \sum_{i \in \samplingset} \hspace*{-1mm}\big[\Phi\gindex(\widehat{\vw}\gindex) \!-\! (\widehat{\gsignal}\gindex)^{T}\vt\gindex\big] \!+\!\samplesize\lambda \| \widehat{\gsignal} \|_{\rm TV} \nonumber \\
 & \leq  \hspace*{-2mm}\sum_{i \in \samplingset} \hspace*{-1mm}\big[\Phi\gindex(\trueweights\gindex) \!-\! (\trueweights\gindex)^{T}\vt\gindex\big] \!+\!\samplesize\lambda \| \trueweights \|_{\rm TV}. 
\end{align} 
We can rewrite \eqref{equ_basic_def_lasso_proof} as 
\begin{align}
\label{equ_basic_def_lasso_proof1}
 & \hspace*{-1mm} \sum_{i \in \samplingset} \hspace*{-1mm}\big({\bm \varepsilon}\gindex \big)^{T} \widehat{\vw}\gindex\!-\!\big(\bar{\vt}\gindex\big)^{T}
 \widehat{\gsignal}\gindex \!+\!\Phi\gindex(\widehat{\vw}\gindex) \!+\!\lambda\| \widehat{\gsignal}  \|_{\rm TV}  \\[2mm]
  & \hspace*{-1mm} \leq \sum_{i \in \samplingset} \hspace*{-1mm}\big({\bm \varepsilon}\gindex \big)^{T} \trueweights\gindex\!-\!\big(\bar{\vt}\gindex\big)^{T}
  \trueweights\gindex \!+\!\Phi\gindex(\trueweights\gindex) \!+\!\lambda\| \trueweights  \|_{\rm TV} \nonumber
\end{align} 
with $\bar{\vt}\gindex\!\defeq\!\expect \big\{ \vt\gindex \big\}$ and ``observation noise'' ${\bm \varepsilon}\gindex\!\defeq\!\bar{\vt}\gindex\!-\!\vt\gindex$.
To further develop \eqref{equ_basic_def_lasso_proof1}, we make use of 
\begin{equation} 
\label{equ_minimizer_Q}
\argmin_{\vw \in \mathbb{R}^{\featurelen}}\!-\!\vw^{T}\bar{\vt}\gindex\!+\!\Phi\gindex(\vw)\!=\!\trueweights\gindex, 
\end{equation} 
with the true weight vector $\trueweights\gindex$ underlying \eqref{equ_def_p_i}. 
The identity \eqref{equ_minimizer_Q} can be verified by the zero-gradient condition and evaluating the 
gradient of $\Phi\gindex(\vw)$ (see \cite[Proposition 3.1.]{GraphModExpFamVarInfWainJor}). 
Combining \eqref{equ_basic_def_lasso_proof1} with \eqref{equ_minimizer_Q}, 
\begin{align}
\label{equ_basic_def_lasso_proof2}
& \hspace*{-1mm} \sum_{i \in \samplingset} \hspace*{-1mm}\big({\bm \varepsilon}\gindex \big)^{T} \widetilde{\vw}\gindex \!+\!\lambda\| \widehat{\gsignal}  \|_{\rm TV}  \leq \lambda\| \trueweights  \|_{\rm TV} \end{align} 
with nLasso (estimation) error $\widetilde{\vw}\defeq \widehat{\vw} - \trueweights$. 

Let us assume for the moment that the observation noise ${\bm \varepsilon}\gindex$ is sufficiently small such that 
\begin{align}
\label{equ_small_noise_condition}
 \hspace*{-4mm} \big| (1/\samplesize)  \hspace*{-1mm} \sum_{i \in \samplingset} \hspace*{-2mm} \big({\bm \varepsilon}\gindex\big)^{T} 
  \widetilde{\vw}\gindex \!\big|\!\leq\!\lambda \kappa\| \widetilde{\vw} \|_{\samplingset} \!+\!(\lambda/2) \|  \widetilde{\vw} \|_{\rm TV}
\end{align} 
for every $\widetilde{\vw}\!\in\!\graphsigs$. Here, we used $\kappa \defeq \frac{K\!+\!1}{L\!-\!1}$ and 
\begin{equation} 
\nonumber
\| \vw \|_{\samplingset} \defeq \sqrt{(1/\samplesize) \sum_{i \in \samplingset}  \big\|  \vw\gindex \big\|^{2} }.
\end{equation}  

Combining \eqref{equ_small_noise_condition} with \eqref{equ_basic_def_lasso_proof2}, 
\begin{align}
 \| \widehat{\vw} \|_{\rm TV}&\!\leq\!(1/2) \| \widetilde{\vw} \|_{\rm TV} + \| \trueweights \|_{\rm TV}\!+\!\kappa \| \widetilde{\vw} \|_{\samplingset},
\end{align} 
and, in turn, via the decomposition property $ \| \vw \|_{\rm TV} = \| \vw \|_{\boundary} + \| \vw \|_{\edges \setminus \boundary}$ (see \eqref{equ_def_TV_norm_subs}), 
\begin{align}
\label{equ_proof_nLasso_is_sparse_reg}
 \| \widehat{\vw} \|_{\edges \setminus \boundary} & \!\leq\! \nonumber \\
 & \hspace*{-20mm} (1/2) \| \widetilde{\vw} \|_{\rm TV} \!+\! \| \trueweights \|_{\rm TV}\!-\!\| \widehat{\vw} \|_{\boundary}\!+\! \kappa \| \widetilde{\vw} \|_{\samplingset}  \nonumber \\
  & \hspace*{-20mm} \stackrel{(a)}{\leq} (1/2) \| \widetilde{\vw} \|_{\rm TV}\!+\! \| \trueweights \|_{\boundary}\!-\!\| \widehat{\vw} \|_{\boundary}\!+\! \kappa \| \widetilde{\vw} \|_{\samplingset} \nonumber \\
  & \hspace*{-20mm} \stackrel{(b)}{\leq} (1/2) \| \widetilde{\vw} \|_{\rm TV}\!+\! \| \trueweights \!-\!\widehat{\vw}\|_{\boundary}\!+\! \kappa \| \widetilde{\vw} \|_{\samplingset}, 
\end{align} 
where step $(a)$ is valid since we assume the true underlying weight vectors $\trueweights\gindex$ to be clustered 
according to \eqref{equ_def_clustered_signal_model}. Step $(b)$ uses the triangle 
inequality for the semi-norm $\| \cdot \|_{\boundary}$ (see \eqref{equ_def_TV_norm_subs}). 

Since $\| \widehat{\vw} \|_{\edges \setminus \boundary} = \| \widetilde{\vw} \|_{\edges \setminus \boundary}$, 
we can rewrite \eqref{equ_proof_nLasso_is_sparse_reg} as 
\begin{align}
\label{equ_proof_nLasso_is_sparse_reg1}
 (1/2) \| \widetilde{\vw} \|_{\edges \setminus \boundary} 
& \leq  (3/2) \|\widetilde{\vw}\|_{\boundary}\!+\! \kappa \| \widetilde{\vw} \|_{\samplingset} \nonumber \\
&  \stackrel{\kappa < 1}{<} (3/2) \|\widetilde{\vw}\|_{\boundary}\!+\!  \| \widetilde{\vw} \|_{\samplingset}.
\end{align}  
Thus, for sufficiently small observation noise (such that \eqref{equ_small_noise_condition} is valid), 
the nLasso error $\widetilde{\vw}\!=\!\widehat{\vw}\!-\!\trueweights$ is approximately clustered according 
to \eqref{equ_def_clustered_signal_model}. 


So far, we verified the nLasso error $\widetilde{\vw}$ to be clustered. 
For some edge $\{i,j\} \in \edges$, the error difference $\widetilde{\vw}\gindex - \widetilde{\vw}^{(j)}$, with $i,j \in \cluster_{l}$ 
belonging to the same cluster within the partition $\partition$ underlying \eqref{equ_def_clustered_signal_model}, tends to be 
small.
 
The next step is to verify that the nLasso error $\widetilde{\vw}\!=\!\widehat{\vw}\!-\!\trueweights$ (see \eqref{optProb}) 
cannot be too large. To this end, we apply the triangle inequality for TV to \eqref{equ_basic_def_lasso_proof1} yielding 
\begin{align}
\label{equ_basic_def_lasso_proof11}
 & \sum_{i \in \samplingset} \hspace*{-1mm}\big( {\bm \varepsilon}\gindex \big)^{T} \widetilde{\gsignal}\gindex\!-\!\big(\bar{\vx}\gindex\big)^{T}\widehat{\gsignal}\gindex \!+\!\Phi\gindex(\widehat{\gsignal}\gindex)  \nonumber \\[2mm]
 & \hspace*{0mm}\leq\hspace*{0mm}\sum_{i \in \samplingset} \hspace*{0mm}\!-\!\big(\bar{\vx}\gindex\big)^{T}\trueweights\gindex\!+\!\Phi\gindex(\trueweights\gindex) \!+\!\samplesize \lambda \| \widetilde{\vw}  \|_{\rm TV}. 
\end{align} 
Using Taylor's theorem and Asspt.\ \ref{asspt_FIM_lower_bound}, 
\begin{align} 
\label{equ_strong_conv_loss}
\hspace*{-4mm}\Phi\gindex(\widehat{\gsignal}\gindex)\!-\!\Phi\gindex(\trueweights\gindex)\!-\!\big(\bar{\vx}\gindex\big)^{T} \big(\widehat{\vw}\gindex\!-\!\trueweights\gindex\big) \!\geq\! \FIMeignmin \| \widetilde{\vw}\gindex \|^{2}_{2}. 
\end{align} 

Inserting \eqref{equ_strong_conv_loss} into \eqref{equ_basic_def_lasso_proof11}, 
\begin{align}
\label{equ_basic_def_lasso_proof111}
 \hspace*{-2.5mm}(1/\samplesize)\!\sum_{i \in \samplingset} \hspace*{-1mm}\big[\!-\!\big( {\bm \varepsilon}\gindex \big)^{T} \widetilde{\gsignal}\gindex \!+\! \FIMeignmin \| \widetilde{\vw}\gindex \|^{2}_{2} \big] \!\leq\!\lambda \| \widetilde{\vw} \|_{\boundary}.
\end{align} 
Combining \eqref{equ_small_noise_condition} with \eqref{equ_basic_def_lasso_proof111}, 
\begin{align}
\label{equ_basic_def_lasso_proof1112}
 \hspace*{-2.5mm}\! \FIMeignmin \| \widetilde{\vw} \|^2_{\samplingset} \!\leq\!\lambda   \| \widetilde{\vw}\|_{\boundary}\!+\!\kappa \lambda  \| \widetilde{\vw} \|_{\samplingset}.
\end{align} 
Combining \eqref{equ_proof_nLasso_is_sparse_reg1} with \eqref{equ_ineq_multcompcondition_condition} yields
\begin{align}
\label{equ_bound_compat_condition_proof}
 \| \widetilde{\vw} \|_{\boundary} \leq \kappa\| \widetilde{\vw} \|_{\samplingset}
\end{align}
and, in turn via \eqref{equ_basic_def_lasso_proof1112}, 
\begin{equation}
\label{equ_bound_lambda_kappa}
  \| \widetilde{\vw} \|_{\samplingset} \!\leq\! 2 \lambda  \kappa/\FIMeignmin.
\end{equation} 
Inserting \eqref{equ_bound_lambda_kappa} into \eqref{equ_bound_compat_condition_proof} and \eqref{equ_proof_nLasso_is_sparse_reg1}, 
\begin{align}
 \|\widetilde{\vw} \|_{\rm TV} & =  \|\widetilde{\vw} \|_{\boundary}  +\|\widetilde{\vw} \|_{\edges \setminus \boundary}  \nonumber \\ 
  & \stackrel{\eqref{equ_proof_nLasso_is_sparse_reg1}}{\leq}  \|\widetilde{\vw} \|_{\boundary}  +  3 \|\widetilde{\vw}\|_{\boundary}\!+\! \kappa \| \widetilde{\vw} \|_{\samplingset}  \nonumber \\ 
 & \stackrel{\eqref{equ_bound_compat_condition_proof}}{\leq} 5 \kappa \| \widetilde{\vw} \|_{\samplingset} \nonumber \\ 
  & \stackrel{\eqref{equ_bound_lambda_kappa}}{\leq}  5 \lambda  \kappa^2/\FIMeignmin. \label{equ_upper_boundTV_lambda}
\end{align}
According to \eqref{equ_upper_boundTV_lambda}, we can ensure a prescribed 
error level $ \|\widetilde{\vw} \|_{\rm TV} \leq \eta$ by setting ($\FIMeignmin > 1$) 
\begin{equation} 
\label{equ_choose_lambda_ensure_eta}
\lambda \defeq \eta / (5 \kappa^2).
\end{equation} 

The final step of the proof is to control the probability of \eqref{equ_small_noise_condition} 
to hold. By Cor.\ \ref{cor_bound_noise_term_spectralgap}, \eqref{equ_small_noise_condition} holds if 
\begin{align}
\label{equ_max_cluster_l_sum_noise}
\max_{\cluster_{l} \in \partition} (1/|\cluster_{l}|) \big \| \sum_{i \in \cluster_l} {\bm \varepsilon}_{i} \big\|_{2} \leq (\lambda/2) \kappa,
\end{align} 
and simultaneously 
\begin{align}
\label{equ_condition_max_infty_norm_transformed_noise}
\max_{\cluster_{l} \in \partition} \big\| \big( \mD_{\cluster_{l}}^{\dagger} \big)^{T} {\bm \varepsilon}_{\cluster_{l}} \big\|_{2,\infty} \leq M \lambda/4. 
\end{align} 
	
We first bound the probability that \eqref{equ_max_cluster_l_sum_noise} fails to hold. For a particular 
cluster $\cluster_{l}$, \eqref{equ_concentration_exp_family_bound_FIM_1123} yields 
\begin{equation}
\label{equ_bound_sum_noise_cluster}
\hspace*{-1.5mm}\prob \{  (1/|\cluster_{l}|) \big \| \sum_{i \in \cluster_l} {\bm \varepsilon}_{i} \big\|_{2} \leq (\lambda/2) \kappa\} \!\leq\! 2 \exp\bigg(\hspace*{-2mm}-\! \frac{|\cluster_{l}|\lambda^2\kappa^2}{8\featurelen \FIMeignmax} \bigg).
\end{equation}
Combining this with a union bound over all $\cluster_{l}\!\in\!\partition$ yields 
\begin{equation}
\label{equ_final_prob_bound_1}
\hspace*{-1.5mm}\prob \{\mbox{``\eqref{equ_max_cluster_l_sum_noise} invalid''} \}\!\leq\!2 |\partition| \max_{l=1,\ldots,|\partition|} \exp\bigg(\hspace*{-2mm}-\! \frac{|\cluster_{l}|\lambda^2\kappa^2}{8\featurelen \FIMeignmax} \bigg).
\end{equation}
	
For controlling the probability of \eqref{equ_condition_max_infty_norm_transformed_noise} failing to 
hold, we combine \eqref{equ_bound_cols_pseudo_D} with Lem.\ \ref{lem_concentration_single}. This yields, using a union bound 
over all edges $e \in \edges$,  
\begin{align}
\label{equ_final_prob_bound_2}
\prob \{\mbox{``\eqref{equ_condition_max_infty_norm_transformed_noise} invalid''}\}\!\leq\! 2|\edges| \exp \bigg(\!-\! \frac{\samplesize  \rho^2_{\partition} \lambda^2} {64 \FIMeignmax \featuredim \| \mathbf{A} \|^{2}_{\infty}}\bigg).
\end{align}
A union bound yields \eqref{lower_boung_prob_main_results} by summing the bounds 
\eqref{equ_final_prob_bound_1} and \eqref{equ_final_prob_bound_2} for the choice \eqref{equ_choose_lambda_ensure_eta}. 

\end{document}